\theoremstyle{plain}
\newtheorem{theorem}{Theorem}[section]
\newtheorem{lemma}[theorem]{Lemma}
\newtheorem{corollary}[theorem]{Corollary}
\theoremstyle{definition}
\newcounter{assume}
\newcounter{define}
\newtheorem{definition}[define]{Definition}
\newtheorem{assumption}[assume]{Assumption}
\theoremstyle{remark}
\def\argmin{\mathop{\mathrm{argmin}}}
\title{Data-dependent and Oracle Bounds on Forgetting in Continual Learning}
 \author{Lior Friedman \\
Department of Electrical and Computer Engineering\\
Technion - Israel Institute of Technology\\
Haifa 3200003, Israel \\
\texttt{liorf@campus.technion.ac.il} \\
\And % Use And to have authors side by side
Ron Meir  \\
Department of Electrical and Computer Engineering \\
Technion - Israel Institute of Technology \\
Haifa 3200003, Israel \\
\texttt{rmeir@ee.technion.ac.il} \\
}
\begin{document}

\maketitle

\begin{abstract}
In continual learning, knowledge must be preserved and re-used between tasks, maintaining good transfer to future tasks and minimizing forgetting of previously learned ones. While several practical algorithms have been devised for this setting, there have been few theoretical works aiming to quantify and bound the degree of Forgetting in general settings. For \emph{exemplar-free} methods, we provide both data-dependent upper bounds that apply \emph{regardless of model and algorithm choice}, and oracle bounds for Gibbs posteriors. We derive an algorithm based on our bounds and demonstrate empirically that our approach yields tight and practical bounds on forgetting for several continual learning problems and algorithms. 
\end{abstract}

\section{Introduction}

Continual learning is a burgeoning machine learning setting where data from different tasks
are presented sequentially to the learner. The usual stated goal of methods in this setting is to adapt the learner to new tasks as they appear while also preserving its performance on previous tasks \citep{de2021continual, hadsell2020embracing,parisi2019continual}.
This performance on previous tasks is called backward transfer, or forgetting, and one of the key challenges in continual learning is avoiding \emph{Catastrophic Forgetting} \citep{goodfellow2015empirical,ramasesh2020anatomy,kirkpatrick2017overcoming}, where performance on previous tasks degrades significantly as the model adapts to new tasks. 
Although avoiding catastrophic forgetting is desirable, much of the focus of continual learning research in recent years was on settings with a shared optimal solution. Realistically, 
we should not expect one model to perform optimally across all settings, for example if the data distribution changes gradually. A recent paper by \citet{kumar2023continual} discusses continual learning
as a computationally constrained optimization problem and argues that forgetting non-recurring
information is not “catastrophic", especially in changing environments. We will discuss this topic in our empirical evaluation.

While there have been several empirical approaches in the field, there are relatively few theoretical works that explore and attempt to quantify and bound this backward transfer. Some, such as \citet{evron2022catastrophic, lin2023theory, li2024theory} focus on linear regression models to consider the effect of task order and similarity on forgetting. Others, such as \citet{bennani2020generalisation, doan2021theoretical} utilize the Neural Tangent Kernel (NTK) regime to focus on more complex task similarity measures as predictors of forgetting.
Several more general works such as \citet{benavides2022theory} apply notions of VC-dimension to arrive at more general scaling laws and data-agnostic upper bounds on forgetting for the ERM model, but may be difficult to apply for complex predictors due to the potentially large VC-dimension of models such as deep neural networks \citep{bartlett2019nearly}. 
We note that many of the known results \citep{bennani2020generalisation, evron2022catastrophic,  benavides2022theory} provide upper bound on forgetting for the training data, though there are previous results on forgetting for test data as well \citep{doan2021theoretical, lin2023theory, li2024theory} for specific models. In particular, \cite{lin2023theory} provide an exact characterization of forgetting for the Stochastic Gradient Descent (SGD) algorithm under specific assumptions in the linear regression setting.
\citet{bennani2020generalisation, doan2021theoretical} derive bounds for more general settings for very wide architectures that follow the NTK regime, providing bounds specifically for the SGD algorithm as well as a continual learning variant known as Orthogonal Gradient Descent (OGD).

In this work, we explore upper bounds on forgetting that apply to general architectures and optimization algorithms
. We note that existing results are restricted to specific setups (e.g.linear regression) and algorithms (e.g. SGD). We use the PAC-Bayes \citep{Mcallester, Catoni2004, alquier2021user}  
framework to derive and analyze upper bounds on backward transfer, focusing on the Gibbs posterior \citep{casella1992explaining}. In Section \ref{sec:data-dep-bounds} we derive general bounds for the two-task and multiple-task settings without making any model assumptions. We then focus our discussion in Section \ref{sec:oracle-bounds} on oracle bounds for the Gibbs posterior in general, and under specific assumptions of task similarity. 
We then proceed to empirically demonstrate our bounds\footnote{Code is available at \url{https://github.com/lioritan/continual_forgetting_pb}} for both a Variational Inference (VI) \citep{hoffman2013stochastic} based algorithm as well as Elastic Weight Consolidation (EWC) \citep{kirkpatrick2017overcoming}, on several continual learning problems in Section \ref{sec:empirical}. To the best of our knowledge, our bounds are the first general PAC-Bayes upper bounds on forgetting in continual learning that are applicable regardless of model choice and task distribution, as well as the first established forgetting upper bounds for the EWC algorithm. Table \ref{settings-table} provides an overview of existing settings and limitations for known results, as well as our own results.

\begin{table}[h]
\caption{Settings covered by existing test forgetting bounds as well as our bounds (final two rows). Methods marked with * use added noise during bound estimation for deterministic architectures.} 
\label{settings-table}
\vskip 0.1in
\begin{center}
\begin{small}
\begin{sc}
\begin{tabular}{lccc}
Algorithm  &Architecture & Task construction & Reference \\
\midrule
SGD & Linear, overparametrized & Linear regression & \cite{lin2023theory}\\
SGD & Linear, Mixture-of-experts & Linear regression & \cite{li2024theory}\\
SGD and OGD & NTK regime & Any & \cite{bennani2020generalisation}\\
SGD and OGD & NTK regime & Any & \cite{doan2021theoretical}\\
Data-Agnostic & Any & $\mathcal{F}-$related tasks & \cite{benavides2022theory}\\
Any & Any* & Any & Corollary \ref{thm:forget-sumT-final}\\
Emprirical Gibbs & Any* & Any & Theorems \ref{thm:forgetting-asymptotic},\ref{thm:forgetting-asymptotic-cov}\\
\end{tabular}
\end{sc}
\end{small}
\end{center}
\vskip -0.1in
\end{table}

\section{Problem definition}
\begin{figure}[h]
    \centering
    \includegraphics[width=0.4\linewidth]{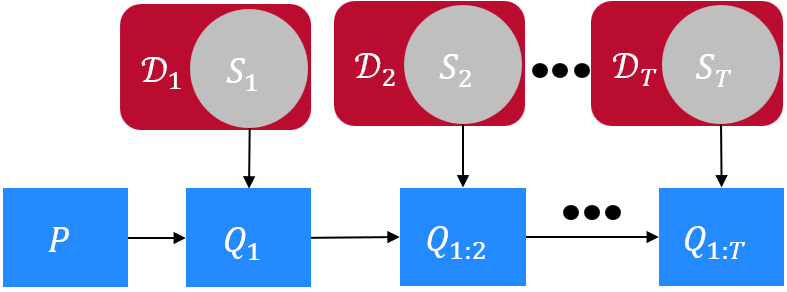}
    \caption{Continual Learning problem. A data-free prior $P$ is adapted via an empirical sample $S_1\sim\mathcal{D}_1$ to a posterior $Q_1$. This posterior serves as the new prior for the next task and so on until we reach a final posterior $Q_{1:T}$.}
    \label{fig:setup-label}
\end{figure}
We consider a finite sequence of tasks $\{1,2,\ldots,T\}\triangleq [T]$, where for each task $k\in[T]$, we are given a batch of i.i.d.~data $S_k\sim \mathcal{D}_k$. 
The sample for a given task is defined as $S=\{z_i\}_{i=1}^m, z_i=(x_i,y_i)$ where $x_i\in \mathcal{X}, y_i\in \mathcal{Y}$.
 A hypothesis $h\in \mathcal{H}$ is a mapping $h:\mathcal{X}\rightarrow \mathcal{Y}$  characterized by a loss $\ell(h,z)$.

\begin{definition}
	The expected loss of a given hypothesis $h\in \mathcal{H}$ is defined as $\mathcal{L}(h, \mathcal{D}) \triangleq \mathbb{E}_{z\in \mathcal{D}} \ell(h, z)$. The empirical loss of a hypothesis w.r.t.~a sample $S$ is defined as $\hat{\mathcal{L}}(h, S) \triangleq \frac{1}{m}\sum_{j=1}^{m}\ell(h, z_j)$.
\end{definition}

In the following Section, and in Section \ref{sec:oracle-bounds}, we first consider only two tasks from distributions $\mathcal{D}_s$ and $\mathcal{D}_t$, referring to source and target. Within the classic PAC-Bayes setting \citep{Mcallester,alquier2021user}, let $Q_s$ be a distribution over the set of hypotheses learned by some algorithm $J_s$ over $S_s\sim \mathcal{D}_s$ and a data-free prior hypothesis distribution $P$, such that $Q_s=J_s(S_s, P).$ We then proceed to utilize another algorithm $J_t$ that operates on $S_t\sim \mathcal{D}_t$, making use of $Q_s$, such that 
$Q_{s:t}=J_t(S_t, Q_s).$  
For now, we make no assumptions on $J_s$ and $S_t$ other than their inputs. Of particular note is that algorithm $J_t$ is exemplar-free (i.e. it has no access to data from $\mathcal{D}_s$), and that task $\mathcal{D}_t$ is chosen independently of the prior $Q_s$, meaning that task choice is independent of the learning process.

\begin{definition}
	The \emph{backwards transfer loss} of $Q_{s:t}$ on task $\mathcal{D}_s$ is defined as $$\mathrm{BWT}(Q_{s:t}, \mathcal{D}_s) \triangleq \mathbb{E}_{h\sim Q_{s:t}}\left [\mathcal{L}(h, \mathcal{D}_s)\right ]=\mathcal{L}(Q_{s:t}, \mathcal{D}_s).$$
	The \emph{forgetting} of $Q_{s:t}$ on task $\mathcal{D}_s$ is defined as \begin{align*}
 F(Q_{s:t}, \mathcal{D}_s) &\triangleq \mathrm{BWT}(Q_{s:t}, \mathcal{D}_s) - \mathbb{E}_{h\sim Q_{s}}\left [\mathcal{L}(h, \mathcal{D}_s)\right ]=\mathcal{L}(Q_{s:t}, \mathcal{D}_s)-\mathcal{L}(Q_{s}, \mathcal{D}_s).\end{align*}
\end{definition}
Intuitively, the backward transfer $\mathcal{L}(Q_{s:t}, \mathcal{D}_s)$ measures the performance of the updated model on the previous task $\mathcal{D}_s$, and the forgetting $F(Q_{s:t}, \mathcal{D}_s)$ measures how much worse this performance is compared to the loss immediately after learning $S_s\sim\mathcal{D}_s$. Since the continual learning setting assumes that tasks are given in order, when we are given task $\mathcal{D}_t$ we can no longer optimize $\mathcal{L}(Q_{s}, \mathcal{D}_s)$. As such, from an optimization perspective, the best we can do at any given point is to minimize the backward transfer loss $\mathcal{L}(Q_{s:t}, \mathcal{D}_s)$. Our definition of forgetting is identical to that of \citet{lin2023theory} (but for non-deterministic hypothesis). Our definition of backward transfer could be equated to either the positive part of the forgetting loss or to their definition of ``generalization error", taken only on previous tasks.

We note that this definition refers to the \emph{test forgetting}, meaning the model's ability to still generalize well on previously seen domains, rather than measuring the retention of training performance on previous tasks. Due to this definition, simple measures such as memorization of previous training tasks cannot effectively minimize forgetting.
\begin{definition}
    The transfer loss of $Q_{s:t}$ on task $\mathcal{D}_t$ is defined as $\mathcal{L}(Q_{s:t}, \mathcal{D}_t)$, also known as the generalization loss.
\end{definition}

Compared to the backward transfer, forward transfer (i.e., generalization) is better explored in general, and in the continual learning setting in particular, and several bounds are available \citep{bennani2020generalisation, benavides2022theory}. 

We also note that several existing PAC-Bayes bounds for generalization can be extended or altered for forward transfer in the continual setting:  \citet{haddouche2022online} derive generalization bounds for online learning, and both \citet{haddouche2023pac} and \citet{chugg2023unified} derive anytime bounds that can be modified for generalization in the continual learning setting. While not directly relevant, we also note that there are existing PAC-Bayes bounds for generalization in the context of meta-learning \citep{pentina2014pac,pentina2015lifelong,amit2018meta}, that can be considered as an offline version of the continual learning setting. To the best of our knowledge, there are no bounds on forgetting in the context of meta-learning.

Extending these notions to multiple task setting, where tasks $\{1,\ldots,T\}$ appear sequentially,  we define 
 \begin{align*}
 &BWT(Q_{1:T}, [T]) \triangleq \frac{1}{T-1}\sum_{t=1}^{T-1}\mathcal{L}(Q_{1:T}, \mathcal{D}_t)\\
 &F(Q_{1:T}, [T]) \triangleq \frac{1}{T-1}\sum_{t=1}^{T-1}\left (\mathcal{L}(Q_{1:T}, \mathcal{D}_t)-\mathcal{L}(Q_{1:t}, \mathcal{D}_t)\right )\end{align*}
where $Q_{1:t}$ is the learned posterior based on $t$ tasks. See also Appendix \ref{append:notation} for a detailed summary of notation.
%

%=========================================================================
\section{Data-dependent bounds for backward transfer}\label{sec:data-dep-bounds}

As we mentioned previously, in the continual learning setting we have no access to future tasks, and thus minimizing test forgetting reduces to minimizing the overall backward transfer as a proxy objective. Since any upper bound on backward transfer can be converted to an upper bound on forgetting, we focus our efforts on upper bounds for backward transfer.
In order to arrive at an upper bound on the backward transfer, we make use of concentration inequalities, relying on the change-of-measure inequality of \citet{donsker1975large}.
%
%From this lemma, we can arrive at our first result.
%
\begin{theorem} \label{thm:forget-base2} $\mathrm{(Forgetting)}$
    For any fixed $S_s,S_t,Q_s,Q_{s:t}$, and $
    \lambda_t>0$, 
    \begin{align}\begin{split}
 \label{eq:forget-base2}
&\mathcal{L}(Q_{s:t}, \mathcal{D}_s) \leq \hat{\mathcal{L}}(Q_{s:t}, S_t) + \frac{1}{\lambda_t} D_{\mathrm{KL}}(Q_{s:t}||Q_{s})
+\frac{1}{\lambda_t}\log\mathbb{E}_{h\sim Q_{s}}\left [e^{\lambda_t(\mathcal{L}(h,\mathcal{D}_s)-\hat{\mathcal{L}}(h,S_t))} \right ].
\end{split}\end{align}
\end{theorem}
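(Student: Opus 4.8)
The plan is to obtain the bound as a direct, deterministic consequence of the Donsker--Varadhan change-of-measure inequality; no concentration step is required because every quantity in the statement is held fixed. Recall that for any two probability measures $Q \ll P$ on $\mathcal{H}$ and any measurable $\phi:\mathcal{H}\to\mathbb{R}$ for which the expectations below exist, $\mathbb{E}_{h\sim Q}[\phi(h)] \le D_{\mathrm{KL}}(Q\|P) + \log\mathbb{E}_{h\sim P}[e^{\phi(h)}]$. I would apply this with $P = Q_{s}$, $Q = Q_{s:t}$, and the choice $\phi(h) = \lambda_t\big(\mathcal{L}(h,\mathcal{D}_s) - \hat{\mathcal{L}}(h,S_t)\big)$. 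This choice is legitimate since $Q_{s:t}=J_t(S_t,Q_s)$ is produced from $Q_s$ and $\lambda_t>0$, $S_s$, $S_t$ are all fixed, so $\phi$ is a fixed function and the comparison measure $Q_s$ is exactly the data-dependent prior the algorithm $J_t$ is allowed to see.

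Carrying out the substitution yields
$\lambda_t\,\mathbb{E}_{h\sim Q_{s:t}}\big[\mathcal{L}(h,\mathcal{D}_s) - \hat{\mathcal{L}}(h,S_t)\big] \le D_{\mathrm{KL}}(Q_{s:t}\|Q_s) + \log\mathbb{E}_{h\sim Q_s}\big[e^{\lambda_t(\mathcal{L}(h,\mathcal{D}_s)-\hat{\mathcal{L}}(h,S_t))}\big]$.
Dividing through by $\lambda_t>0$ and using linearity of expectation to split the left-hand side, I identify $\mathbb{E}_{h\sim Q_{s:t}}[\mathcal{L}(h,\mathcal{D}_s)] = \mathcal{L}(Q_{s:t},\mathcal{D}_s)$ and $\mathbb{E}_{h\sim Q_{s:t}}[\hat{\mathcal{L}}(h,S_t)] = \hat{\mathcal{L}}(Q_{s:t},S_t)$ from the definitions of the lifted expected and empirical losses. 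Moving the empirical target term to the right-hand side produces exactly \eqref{eq:forget-base2}.

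Since the argument is essentially a one-line invocation of change-of-measure, there is no real analytic obstacle; the points that deserve care are conceptual bookkeeping ones. First, one should justify comparing $Q_{s:t}$ against the previous-task posterior $Q_s$ rather than the data-free prior $P$: this is what turns the KL term into a genuine continual-learning quantity measuring how far the update drifts from the source posterior, and it is consistent with $J_t$ accessing the past only through $Q_s$. Second, the final term is a log-moment-generating function of the gap $\mathcal{L}(h,\mathcal{D}_s)-\hat{\mathcal{L}}(h,S_t)$ under $h\sim Q_s$; unlike in a standard PAC-Bayes bound it couples a population loss on $\mathcal{D}_s$ with an empirical loss on the \emph{other} sample $S_t$, so it does not simply concentrate to zero and will have to be handled separately in the subsequent results (for instance by further bounding it or by taking expectations over $S_t$). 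I would therefore emphasize that the inequality as stated is deterministic and holds pointwise in $(S_s,S_t,Q_s,Q_{s:t})$, which is precisely what the ``for any fixed'' hypothesis provides and what later sections will build on.
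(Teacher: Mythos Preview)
Your proposal is correct and is essentially the same argument as the paper's: both invoke the Donsker--Varadhan change-of-measure inequality with $\phi(h)=\lambda_t(\mathcal{L}(h,\mathcal{D}_s)-\hat{\mathcal{L}}(h,S_t))$, $\rho=Q_{s:t}$, and $\pi=Q_s$, then divide by $\lambda_t$ and rearrange. The only cosmetic difference is that the paper cites the centered form of the inequality (Lemma~\ref{lemma:concentration}) and then cancels the $\mathbb{E}_{Q_s}[f]$ terms explicitly, whereas you apply the uncentered Donsker--Varadhan form directly; these are algebraically identical.
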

The full proof of Theorem \ref{thm:forget-base2} is provided in Appendix \ref{append:proofs}. 
The proof follows the same basic techniques and structure of PAC-Bayes generalization upper bounds \citep{Mcallester,alquier2021user}.
The main idea is to use Lemma \ref{lemma:concentration} (the change-of-measure inequality) with $f(z)=\lambda_t(\mathcal{L}(z,\mathcal{D}_s)-\hat{\mathcal{L}}(z,S_t))$.
Unlike standard (forward) transfer results, the l.h.s.\ depends on the source task $s$ while the r.h.s.\ depends on task $t$. This is necessary since we no longer have access to the training data of the source task $S_s$. By subtracting $\mathcal{L}(Q_s,\mathcal{D}_s)$ from both sides, we arrive at an upper bound on the forgetting.

Looking at the three terms in the r.h.s.~of \eqref{eq:forget-base2} we see a complex tradeoff:
if $\hat{\mathcal{L}}(Q_s,S_t)$ is small, the first two terms will be small as choosing $Q_{s:t}=Q_s$ or a posterior with low KL-divergence will lead to low empirical loss, but the final term will be larger (we subtract a smaller number in the exponent). If, on the other hand, $\hat{\mathcal{L}}(Q_s,S_t)$ is high, we can expect the disagreement term  $$\frac{1}{\lambda_t}\log\mathbb{E}_{h\sim Q_{s}}\left [e^{\lambda_t(\mathcal{L}(h,\mathcal{D}_s)-\hat{\mathcal{L}}(h,S_t))} \right ]$$ to be small, but the first two terms will increase as achieving low $\hat{\mathcal{L}}(Q_{s:t},S_t)$ requires higher KL-divergence. In both cases, if the prior $Q_s$ transfers well to the new task $S_t$, as expected for similar tasks, we see a lower r.h.s.

We note that unlike the other terms in the r.h.s.~of \eqref{eq:forget-base2}, the expected loss on the previous task $\mathcal{L}(h,\mathcal{D}_s)$ is not data-dependent. However, since the disagreement term does not depend on the current posterior $Q_{s:t}$, it has no direct effect on any optimization of the posterior. 

Throughout the rest of the paper, we focus on classification problems, though our bounds hold for any setting where the conditions hold. Similarly to many PAC-Bayes bounds, results can be extended to unbounded losses (e.g. regression) under moment conditions (e.g. sub-Gaussian losses \citep{alquier2016properties, alquier2021user}).
\begin{assumption}\label{assume:bounded-loss}
For any hypothesis and data, the loss is bounded, $\ell(h,z)\in [0, K].$
\end{assumption}

For the $T$-task setting, we can bound the individual losses for each task.
By using the same change of measure inequality as in Theorem \ref{thm:forget-base2}, and averaging over previous tasks, we get
\begin{corollary}\label{thm:forget-sumT-final} ($\mathrm{Forgetting}$)
Let $$\Delta \mathcal{L}(h,s,t)\triangleq \mathcal{L}(h,\mathcal{D}_s)-\mathcal{L}(h,\mathcal{D}_t).$$
For any fixed $\lambda_T>0$ and $Q_{1:T-1},S_1,\ldots,S_{T-1}$, the following applies uniformly for all posteriors $Q_{1:T}$ with probability at least $1-\delta$ over the choice of $S_T$ ($m_t=|S_t|$), 
    \begin{align}
    \label{eq:final-forget-bound}
F(Q_{1:T}, [T]) &\leq \hat{\mathcal{L}}(Q_{1:T}, S_T)-\frac{1}{T-1}\sum_{t=1}^{T-1}\mathcal{L}(Q_{1:t},\mathcal{D}_t)\nonumber + \frac{1}{\lambda_T} D_{\mathrm{KL}}(Q_{1:T}||Q_{1:T-1})\\&+\frac{\lambda_T K^2}{8m_T}+\frac{\log{1/\delta}}{\lambda_T}
 +\frac{1}{(T-1)\lambda_T}\sum_{t=1}^{T-1}\log\mathbb{E}_{Q_{1:T-1}}\left [e^{\lambda_T\Delta \mathcal{L}(h,t,T
 )} \right ].
\end{align}

\end{corollary}

The proof of Corollary \ref{thm:forget-sumT-final} appears in Appendix \ref{append:proofs}.

\section{Oracle bounds for backward transfer}\label{sec:oracle-bounds}

First, recall the Gibbs posterior in the single task setting.
\begin{definition}
    The empirical \emph{Gibbs posterior} with parameter $\lambda$, is defined as 
\begin{equation} \label{defn:gibbs}
\hat Q^\lambda_s(h)=\frac{P(h)e^{-\lambda\hat{\mathcal{L}}(h,S_s)}}{\mathbb{E}_{h\sim P}\left [e^{\lambda\hat{\mathcal{L}}(h,S_s)} \right ]}~.
\end{equation}
\end{definition}
While the general upper bounds on backward transfer in Section \ref{sec:data-dep-bounds} are useful in practice, as well as for designing theoretically motivated algorithms (see Section \ref{sec:empirical}), there is merit in trying to better understand the behavior of these bounds for specific posterior distributions. To that end, we consider bounds on performance relative to that of an oracle who \emph{knows} the data-distribution, as opposed to the data-dependent bounds established in Section \ref{sec:data-dep-bounds}. Specifically, we consider the continual Gibbs learner
$
\hat{Q}^{\lambda_t}_{s:t}(h)=\frac{Q_s(h)e^{-\lambda_t\hat{\mathcal{L}}(h,S_t)}}{\mathbb{E}_{h\sim Q_s}\left [e^{-\lambda_t\hat{\mathcal{L}}(h,S_t)}\right ]}~. 
$
The Gibbs learner is of particular interest in the context of analyzing bounds with KL-divergence as it provides an explicit expression for the divergence for any prior, as well as being the optimal posterior for minimizing the r.h.s. in Theorem \ref{thm:forget-base2}. In addition, we note that if $\lambda\rightarrow\infty$, the Gibbs posterior converges to the empirical risk minimizer (ERM), meaning we can potentially apply our upper bounds to more traditional methods such as SGD.
%Note that as opposed to the more general Gibbs posterior of \eqref{defn:gibbs}, here the distribution takes into account $S_t$ in addition to $S_s$. \RM{Why say `more general'? Just say that we take the prior here to be $Q_s$.}

\begin{corollary}
 \label{thm:oracle-base}
For any $Q_s, S_s, \lambda_t>0$, 
\begin{align}\begin{split} \label{eq:oracle-base}
\mathbb{E}_{S_t\sim \mathcal{D}_t}&\mathcal{L}( \hat{Q}^{\lambda_t}_{s:t},\mathcal{D}_s)\leq \inf_{Q_{s:t}}\left \{ \mathcal{L}(Q_{s:t},\mathcal{D}_t) + \frac{1}{\lambda_t}D_{\mathrm{KL}}(Q_{s:t}||Q_{s}) \right \} +\frac{\lambda_t K^2}{8m_t}+\frac{1}{\lambda_t}\log\mathbb{E}_{h\sim Q_s}\left [e^{\lambda_t\Delta \mathcal{L}(h,s,t)} \right ].%\nonumber 
\end{split}\end{align}
\end{corollary}
Proof of Corollary \ref{thm:oracle-base} is provided in Appendix \ref{append:proofs}. 
The main idea of the proof is to choose the Gibbs learner as the posterior in \eqref{eq:forget-base2}.
%Equation \eqref{eq:oracle-base} contains a \emph{domain disagreement} term, $\mathrm{Dis}(Q_s,\mathcal{D}_s, \mathcal{D}_t, \lambda_t )\triangleq\frac{1}{\lambda_t}\log\mathbb{E}_{h\sim Q_s}\left [e^{\lambda_t\Delta \mathcal{L}(h,s,t)} \right ].$
%
%
As we can see from Corollary \ref{thm:oracle-base}, the behavior of $Q_s$ with regard to both tasks can affect the bound significantly, though its exact effect is unclear.

In order to better understand the overall behavior of the bound's elements, as well as examining whether stronger assumptions can offer more insightful bounds, we define a measure of the task similarity based on the \emph{loss covariance} between the tasks:
$$
\mathrm{cov}_{\lambda_t}(P,s,t)\triangleq \mathrm{cov}_{h\sim P}\left (e^{-\lambda_t\hat{\mathcal{L}}(h,S_s)}, e^{-\lambda_t\hat{\mathcal{L}}(h,S_t)}\right ).
$$
%
% \begin{equation*} 
% \begin{split}
% &\mathbb{E}_{h\sim P}\left [e^{-\lambda_t\mathcal{L}(h,\mathcal{D}_t)-\lambda_t\mathcal{L}(h,\mathcal{D}_s)}\right ]\\&=\mathbb{E}_{h\sim P}\left [e^{-\lambda_t\mathcal{L}(h,\mathcal{D}_t)}\right ] \mathbb{E}_{h\sim P}\left [e^{-\lambda_t\mathcal{L}(h,\mathcal{D}_s)}\right ]\\&+\mathrm{cov}_{h\sim P}\left (e^{-\lambda_t\mathcal{L}(h,\mathcal{D}_s)}, e^{-\lambda_t\mathcal{L}(h,\mathcal{D}_t)}\right )
% \end{split}
% \end{equation*}
%
We note that this covariance term is bounded in $[-1, 1]$. 
\begin{corollary} \label{thm:cov-2task}
For any $\lambda_t>0$, if \eqref{defn:gibbs} holds, and $\mathrm{cov}_{\lambda_t}(P,s,t)\geq 0$, 
%\begin{align} 
%\mathbb{E}_{S_s\sim \mathcal{D}_s}\mathbb{E}_{S_t\sim \mathcal{D}_t}\mathcal{L}( \hat{Q}^{\lambda_t}_{s:t},\mathcal{D}_s)&\leq \frac{\lambda_t K^2}{8m_s}+\mathcal{L}(P,\mathcal{D}_s)\\
%
%\mathbb{E}_{S_s\sim \mathcal{D}_s}\mathbb{E}_{S_t\sim \mathcal{D}_t}\mathcal{L}( \hat{Q}^{\lambda_t}_{s:t},\mathcal{D}_t)&\leq \frac{\lambda_t K^2}{8m_t}+\mathcal{L}(P,\mathcal{D}_t) 
%\end{align}
\begin{align*} 
\mathbb{E}_{S_s,S_t\sim \mathcal{D}_s,\mathcal{D}_t}\mathcal{L}( \hat{Q}^{\lambda_t}_{s:t},\mathcal{D}_s)\leq \frac{\lambda_t K^2}{8m_s}+\mathcal{L}(P,\mathcal{D}_s)\quad ; \quad
\mathbb{E}_{S_s,S_t\sim \mathcal{D}_s}\mathcal{L}( \hat{Q}^{\lambda_t}_{s:t},\mathcal{D}_t)\leq \frac{\lambda_t K^2}{8m_t}+\mathcal{L}(P,\mathcal{D}_t). 
\end{align*}
\end{corollary}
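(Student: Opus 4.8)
The plan is to revisit the derivation behind \eqref{eq:thm-start-of-no-forget} and isolate the single place where target-task quantities enter, then show that the hypothesis $\mathrm{cov}_{\lambda_t}(P,s,t)\ge 0$ collapses that contribution. Under \eqref{defn:gibbs} the composed Gibbs learner factorizes as $\hat{Q}^{\lambda_t}_{s:t}(h)\propto P(h)\,e^{-\lambda_t\hat{\mathcal{L}}(h,S_s)}e^{-\lambda_t\hat{\mathcal{L}}(h,S_t)}$, with normalizer $Z_{st}=\mathbb{E}_{h\sim P}[e^{-\lambda_t\hat{\mathcal{L}}(h,S_s)}e^{-\lambda_t\hat{\mathcal{L}}(h,S_t)}]$; writing $Z_s,Z_t$ for the two single-task normalizers, the elementary identity $Z_{st}=Z_sZ_t+\mathrm{cov}_{\lambda_t}(P,s,t)$ shows that $\mathrm{cov}_{\lambda_t}(P,s,t)\ge 0$ is exactly $Z_{st}\ge Z_sZ_t$, i.e.\ $-\tfrac1{\lambda_t}\log Z_{st}\le -\tfrac1{\lambda_t}\log Z_s-\tfrac1{\lambda_t}\log Z_t$. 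This is the only new ingredient beyond the preceding theorem.

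First I would rerun the change-of-measure argument of Theorem~\ref{thm:forget-base2} with $P$ as reference measure and $\hat{Q}^{\lambda_t}_{s:t}$ as posterior, splitting $\mathcal{L}(h,\mathcal{D}_s)=\hat{\mathcal{L}}(h,S_s)+(\mathcal{L}(h,\mathcal{D}_s)-\hat{\mathcal{L}}(h,S_s))$; the Gibbs expression for $D_{\mathrm{KL}}(\hat{Q}^{\lambda_t}_{s:t}\|P)$ cancels the empirical $S_s$-loss and leaves $\mathcal{L}(\hat{Q}^{\lambda_t}_{s:t},\mathcal{D}_s)\le -\mathbb{E}_{h\sim\hat{Q}^{\lambda_t}_{s:t}}[\hat{\mathcal{L}}(h,S_t)]-\tfrac1{\lambda_t}\log Z_{st}+\tfrac1{\lambda_t}\log\mathbb{E}_{h\sim P}[e^{\lambda_t(\mathcal{L}(h,\mathcal{D}_s)-\hat{\mathcal{L}}(h,S_s))}]$. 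Taking $\mathbb{E}_{S_s}$ and applying Jensen and Hoeffding's lemma exactly as in Corollary~\ref{thm:first}, the last term contributes only $\tfrac{\lambda_tK^2}{8m_s}$; every target-task term appearing in \eqref{eq:thm-start-of-no-forget} comes from bounding $-\tfrac1{\lambda_t}\log Z_{st}$ without separating $s$ and $t$. Substituting $-\tfrac1{\lambda_t}\log Z_{st}\le -\tfrac1{\lambda_t}\log Z_s-\tfrac1{\lambda_t}\log Z_t$ decouples them: the $-\tfrac1{\lambda_t}\log Z_s$ piece joins the source terms and, after $\mathbb{E}_{S_s}$ and Jensen, yields $\mathcal{L}(P,\mathcal{D}_s)$ — reconstituting the ordinary oracle bound for the Gibbs posterior trained on $S_s$ alone — while the $-\tfrac1{\lambda_t}\log Z_t$ piece is absorbed by the $-\mathbb{E}_{h\sim\hat{Q}^{\lambda_t}_{s:t}}[\hat{\mathcal{L}}(h,S_t)]$ term the $\mathcal{D}_s$-bound already carries (the term discarded via its nonnegativity in the cruder argument). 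This gives $\mathbb{E}_{S_s,S_t}\mathcal{L}(\hat{Q}^{\lambda_t}_{s:t},\mathcal{D}_s)\le \mathcal{L}(P,\mathcal{D}_s)+\tfrac{\lambda_tK^2}{8m_s}$; the second inequality follows after swapping $s\leftrightarrow t$, since both $\hat{Q}^{\lambda_t}_{s:t}$ and $\mathrm{cov}_{\lambda_t}(P,s,t)$ are symmetric in the two samples.

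The main obstacle is the bookkeeping in that last cancellation: a direct Donsker--Varadhan bound only gives $-\mathbb{E}_{h\sim\hat{Q}^{\lambda_t}_{s:t}}[\hat{\mathcal{L}}(h,S_t)]-\tfrac1{\lambda_t}\log Z_t\le \tfrac1{\lambda_t}D_{\mathrm{KL}}(\hat{Q}^{\lambda_t}_{s:t}\|P)$, a nonnegative residue, so one must allocate the single available $D_{\mathrm{KL}}(\hat{Q}^{\lambda_t}_{s:t}\|P)$ carefully between the concentration step and the partition-function identity (or pick the reference measure so this residue vanishes). An equivalent route that sidesteps this is the correlation identity $\mathcal{L}(\hat{Q}^{\lambda_t}_{s:t},\mathcal{D}_s)-\mathcal{L}(\hat{Q}^{\lambda_t}_{s},\mathcal{D}_s)=\mathrm{cov}_{h\sim\hat{Q}^{\lambda_t}_{s}}\big(\mathcal{L}(h,\mathcal{D}_s),e^{-\lambda_t\hat{\mathcal{L}}(h,S_t)}\big)\big/\mathbb{E}_{h\sim\hat{Q}^{\lambda_t}_{s}}[e^{-\lambda_t\hat{\mathcal{L}}(h,S_t)}]$: one shows this covariance is nonpositive in expectation by reweighting from $P$ to $\hat{Q}^{\lambda_t}_{s}$ (multiplication by $e^{-\lambda_t\hat{\mathcal{L}}(\cdot,S_s)}$) together with $\mathrm{cov}_{\lambda_t}(P,s,t)\ge 0$, and then invokes the plain Gibbs-posterior oracle bound on $\mathcal{L}(\hat{Q}^{\lambda_t}_s,\mathcal{D}_s)$. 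Finally, a minor point: $\mathrm{cov}_{\lambda_t}(P,s,t)$ is itself a function of $S_s,S_t$, so the hypothesis should be read as holding almost surely (or invoked pointwise where valid, the remainder controlled by $\ell\in[0,K]$), after which the bound survives the outer expectation over $S_s,S_t$.
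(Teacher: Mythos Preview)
Your central observation is exactly the one the paper uses: $\mathrm{cov}_{\lambda_t}(P,s,t)\ge 0$ is precisely $Z_{st}\ge Z_sZ_t$, so $-\tfrac1{\lambda_t}\log Z_{st}\le -\tfrac1{\lambda_t}\log Z_s-\tfrac1{\lambda_t}\log Z_t$. The difficulty is entirely in the bookkeeping step you flag, and as written your argument does not close it.

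Your change-of-measure choice ($\pi=P$, $f(h)=\mathcal{L}(h,\mathcal{D}_s)-\hat{\mathcal{L}}(h,S_s)$) produces the separate term $-\hat{\mathcal{L}}(\hat{Q}^{\lambda_t}_{s:t},S_t)$, and the claim that it ``absorbs'' $-\tfrac1{\lambda_t}\log Z_t$ is false in general: since $-\tfrac1{\lambda_t}\log Z_t=\hat{\mathcal{L}}(\hat{Q}^{\lambda_t}_{t},S_t)+\tfrac1{\lambda_t}D_{\mathrm{KL}}(\hat{Q}^{\lambda_t}_t\|P)$, the sum equals $\hat{\mathcal{L}}(\hat{Q}^{\lambda_t}_{t},S_t)-\hat{\mathcal{L}}(\hat{Q}^{\lambda_t}_{s:t},S_t)+\tfrac1{\lambda_t}D_{\mathrm{KL}}(\hat{Q}^{\lambda_t}_t\|P)$, which has no reason to be nonpositive. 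Your alternative route (b) has the same problem: the identity for $\mathcal{L}(\hat{Q}^{\lambda_t}_{s:t},\mathcal{D}_s)-\mathcal{L}(\hat{Q}^{\lambda_t}_{s},\mathcal{D}_s)$ is correct, but $\mathrm{cov}_{\lambda_t}(P,s,t)\ge 0$ says nothing about the sign of $\mathrm{cov}_{\hat{Q}^{\lambda_t}_s}\!\big(\mathcal{L}(h,\mathcal{D}_s),e^{-\lambda_t\hat{\mathcal{L}}(h,S_t)}\big)$ --- different base measure, and $\mathcal{L}(\cdot,\mathcal{D}_s)$ rather than $e^{-\lambda_t\hat{\mathcal{L}}(\cdot,S_s)}$.

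The paper (this corollary is the $T=2$ case of Theorem~\ref{thm:forgetting-extended}) avoids the issue by keeping $e^{-\lambda_t\hat{\mathcal{L}}(h,S_t)}$ \emph{inside} the log-moment term rather than pulling it out as a posterior expectation. Concretely, take $f(h)=\mathcal{L}(h,\mathcal{D}_s)-\hat{\mathcal{L}}(h,S_s)-\hat{\mathcal{L}}(h,S_t)$ with $\pi=P$ (equivalently, run Theorem~\ref{thm:forget-base2} with $\hat{Q}^{\lambda_t}_s$ as prior and then unravel to $P$). This yields
\[
\mathcal{L}(\hat{Q}^{\lambda_t}_{s:t},\mathcal{D}_s)\;\le\;\tfrac1{\lambda_t}\log\mathbb{E}_P\!\big[e^{\lambda_t(\mathcal{L}(h,\mathcal{D}_s)-\hat{\mathcal{L}}(h,S_s))}\,e^{-\lambda_t\hat{\mathcal{L}}(h,S_t)}\big]\;-\;\tfrac1{\lambda_t}\log Z_{st}.
\]
Now take $\mathbb{E}_{S_s}$, apply Jensen to push it into the numerator, then Hoeffding on the $S_s$-piece: the numerator becomes at most $\tfrac{\lambda_tK^2}{8m_s}+\tfrac1{\lambda_t}\log Z_t$. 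The covariance inequality gives $-\tfrac1{\lambda_t}\log Z_{st}\le -\tfrac1{\lambda_t}\log Z_s-\tfrac1{\lambda_t}\log Z_t$, and the two $\tfrac1{\lambda_t}\log Z_t$ terms cancel \emph{exactly}. What remains is $\tfrac{\lambda_tK^2}{8m_s}-\tfrac1{\lambda_t}\mathbb{E}_{S_s}\log Z_s\le \tfrac{\lambda_tK^2}{8m_s}+\mathcal{L}(P,\mathcal{D}_s)$ by Jensen. So the ``right reference measure'' you allude to is simply to include both empirical losses in $f$, which makes the residue you were worried about vanish identically rather than requiring a separate bound.
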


% We can further improve upon this bound given a tighter bound on the covariance.
% %
% \begin{corollary}
% \label{thm:cov-2task-highcov}
% For any $\lambda>0$, if \eqref{defn:gibbs} holds, and $\mathrm{cov}_{\lambda_t}(P,s,t)\geq e^{-c}$,    
% %
% \begin{align*} 
% \mathbb{E}_{S_s,S_t\sim \mathcal{D}_s,\mathcal{D}_t}\mathcal{L}( \hat{Q}^{\lambda_t}_{s:t},\mathcal{D}_s)\leq \frac{\lambda_t K^2}{8m_s}+\frac{c}{\lambda_t}\\
% \mathbb{E}_{S_s,S_t\sim \mathcal{D}_s,\mathcal{D}_t}\mathcal{L}( \hat{Q}^{\lambda_t}_{s:t},\mathcal{D}_t)\leq \frac{\lambda_t K^2}{8m_t}+\frac{c}{\lambda_t} .
% \end{align*}
% \end{corollary}

This Corollary is a specific case of the more general Theorem \ref{thm:forgetting-extended} that applies for any finite number of tasks. Like the more general Theorem, the loss covariance term depends on the expressive power of the hypothesis class $\mathcal{H}$ as well as the similarity between the tasks.

% We observe that if the covariance is negative, Corollaries \ref{thm:cov-2task} and \ref{thm:cov-2task-highcov} do not hold, and we cannot improve on \eqref{eq:thm-start-of-no-forget}.This is unsurprising, as it means that the hypotheses that perform well on the source task perform poorly on the target and vice versa, which makes learning to generalize without forgetting impossible unless the initial prior is somehow already near-optimal.
% We note that the bound in Theorem \ref{thm:cov-2task} does not converge to zero as sample size goes to infinity, and the final term is in fact independent of the sample size. This is due to the fact that the covariance being non-negative is a necessary condition to guarantee that the backward transfer does not increase. As we can see in Corollary \ref{thm:cov-2task-highcov}, a higher covariance leads to bounds that converge to zero upon appropriately choosing $\lambda_t$, as sample size increases.

% Note that the phenomenon of self-forgetting seen in the NTK regime in \citep{karakida2021learning} can also be seen here: if $m_s$ is small we can get forgetting even if the target task is \emph{identical} (high covariance), and learning the same task again with new data may be worse than using all of the data as a single training set.

\subsection{Oracle bounds for multiple tasks}

As in previous sections, we would like to extend our results to more than two tasks, as well as attempt to derive bounds that are easier to quantify. In this subsection, we will derive an asymptotic oracle upper bound on the backward transfer loss for the empirical Gibbs posterior.

Suppose we are given a set of tasks $[T]\triangleq \{1,2,\ldots, T\}$, appearing sequentially. For simplicity we will assume that each task has identical sample of size $m$.
We would like to minimize the average backward transfer after all tasks, $BWT(Q_{1:T},[T])$. In general, this definition does not assume anything about the construction of the posterior, other than the fact that we have no access to samples from future (unseen) tasks while learning each specific posterior $Q_{1:t}$. 

In order to obtain quantifiable asymptotic bounds on the backward transfer, we make several simplifying assumptions on our hypothesis space and loss function:
\begin{assumption} \label{assume:laplace-things}
    The following five claims hold: 
    (1) The hypothesis space $\mathcal{H}$ is a compact bounded subset of $\mathbb{R}^d$.
    (2) $\ell(h,z)$ is twice continuously differentiable w.r.t.\! $h$.
    (3) For all $i\in[T-1]$, the function $\sum_{t=1,t\neq i}^{T}\mathcal{L}(h,\mathcal{D}_t)$ has a unique global minimum marked $h^*_{1:T/i}$.
    (4) For all $i\in[T-1]$, $P(h^*_{1:T/i})>0$.
    (5) For $m\rightarrow\infty$ \footnote{This holds if data within each task is sampled i.i.d.} $$\mathbb{E}_{S_1,\ldots,S_T}\mathrm{cov}_P\left (e^{\lambda_T\sum_{j=1}^{T}\mathcal{L}(h,\mathcal{D}_j)},e^{-\lambda_T\sum_{j=1}^{T}\hat{\mathcal{L}}(h,S_j)}\right )\leq 0.$$
%\end{itemize}
%
\end{assumption}

Starting from \eqref{eq:oracle-base}, we show in appendix \ref{append:proofs} that if all of the priors are empirical Gibbs distributions, namely  
$\forall i\in\{2,\ldots,T\}, ~~\hat{Q}^{\lambda_i}_{1:i}(h)\propto \hat{Q}^{\lambda_{i-1}}_{1:i-1}(h)e^{-\lambda_i\hat{\mathcal{L}}(h,S_i)}$,  
where $\hat{Q}^{\lambda_1}_{1:1}(h)\propto P(h)e^{-\lambda_1\hat{\mathcal{L}}(h,S_1)}$, we have the following asymptotic oracle bound.

\begin{theorem} \label{thm:forgetting-asymptotic}
For any $\lambda_T>0$, assuming all $Q_{1:j}$ are empirical Gibbs posteriors, and that Assumption \ref{assume:laplace-things} holds, the following holds a.s. in expectation over $S_{j}\sim \mathcal{D}_j$, for all $j\in[T]$
\begin{align} \label{eq:forgetting-asymptotic}
\begin{split}
\lim_{m,T\rightarrow \infty}\frac{1}{T-1}\mathbb{E}_{S_1,\ldots,S_T}\sum_{i=1}^{T-1}\mathcal{L}(\hat{Q}^{\lambda_T}_{1:T}, \mathcal{D}_i) &\leq \lim_{m,T\rightarrow \infty}\frac{1}{T-1}\sum_{i=1}^{T-1}\mathcal{L}(h^*_{1:T/i}, \mathcal{D}_i).
\end{split}
\end{align}
\end{theorem}

The proof mainly uses Laplace's method \citep{shun1995laplace}. This result implies that (asymptotically) forgetting for a given task can be low if hypotheses that work well for other tasks can be applied to it with low loss. This may be the case due to task similarity or due to the expressive power of $\mathcal{H}$. We note that the assumption of a unique  global minimum may be relaxed to a finite set of global minima by replacing the loss in the r.h.s. of \eqref{eq:forgetting-asymptotic} with the highest loss among optimal hypotheses for other tasks, i.e. $\max_{h\in\mathcal{H}:h\in\arg\min_h \sum_{t=1,t\neq i}^{T}\mathcal{L}(h,\mathcal{D}_t)}\mathcal{L}(h, \mathcal{D}_i).$ 

\subsection{Oracle bounds for related tasks}

While the bound in Theorem \ref{thm:forgetting-asymptotic} is applicable for any task distribution, tighter bounds can be derived if tasks are sufficiently similar. To that end, we define the empirical task covariance
$$\mathrm{cov}_{P}(i, [T])\triangleq\mathrm{cov}_{P}(e^{-\lambda_T\hat{\mathcal{L}}(h,S_i)}, e^{-\sum_{j=1,j\neq i}^{T}\lambda_j\hat{\mathcal{L}}(h,S_j)}).$$ 
 %as well as the data-dependent empirical task-covariance
%$$\mathrm{cov}_{\hat{Q}^{\lambda_T}_{i-1}}(i, [T])\triangleq\mathrm{cov}_{\hat{Q}^{\lambda_T}_{i-1}}(e^{-\lambda_T\hat{\mathcal{L}}(h,S_i)}, e^{-\sum_{j=i+1}^{T}\lambda_j\hat{\mathcal{L}}(h,S_j)}).$$

Unlike the two-task version of the loss covariance used in Corollary \ref{thm:cov-2task}, this definition compares the loss on task $i$ to the total loss for all other tasks. Intuitively, this covariance is positive if hypotheses that perform well for task $i$ tend to also work well on average for all other tasks (and vice versa for hypotheses with high loss). Since the prior $P$ is data-free w.r.t. all tasks, this is a measure of both task similarity and the expressive power of hypotheses in $\mathcal{H}$.

Using Laplace's method as well as this covariance term yields the following improved asymptotic bound:

\begin{theorem} \label{thm:forgetting-asymptotic-cov}
Assuming all $Q_{1:j}$ are empirical Gibbs posteriors with $\lambda_i=\sqrt{m}$, $P$ is a uniform prior over $\mathcal{H}$ , and that Assumption \ref{assume:laplace-things} holds with a condition on $h^*_i=\argmin_h\mathcal{L}(h,\mathcal{D}_i)$ instead of the condition on all other other tasks, 
if $\ \forall i\in[T-1],\  \mathrm{cov}_{P}(i, [T])\geq 0$,
the following holds a.s. in expectation over $S_{j}\sim \mathcal{D}_j$, $\forall j\in[T]$
\begin{align} \label{eq:forgetting-asymptotic-cov}
\begin{split}
\lim_{m\rightarrow \infty}\frac{1}{T-1}\mathbb{E}_{S_1,\ldots,S_T}\sum_{i=1}^{T-1}\mathcal{L}(\hat{Q}^{\sqrt{m}}_{1:T}, \mathcal{D}_i) &\leq \frac{1}{T-1}\sum_{i=1}^{T-1}\mathcal{L}(h^*_{i}, \mathcal{D}_i)+\lim_{m \rightarrow\infty}\frac{\mathbb{E}_{S}\sum_{i=1}^{T-1}\log \det \hat{\mathcal{L}}_i''(h^*_{i})}{2\sqrt{m}(T-1)}.
\end{split}
\end{align}
\end{theorem}

The proof is deferred to Appendix \ref{append:proofs}. 
Assuming that the determinant of the Hessian matrix is of order less than $e^{\sqrt{m}}$, which is likely unless $h^*_i$ is a very sharp global minimum, this result shows that the task covariance condition is sufficient to ensure that no forgetting occurs.
We note that since the task covariance is defined w.r.t.~$h\sim P$, we can expect the expressive power of $\mathcal{H}$ to impact whether or not the assumption $\mathrm{cov}_{P}(i, [T])\geq 0$ holds. This result differs from Theorem \ref{thm:forgetting-asymptotic} as the r.h.s. contains the optimal hypotheses for each task $h^*_1,
\ldots,h^*_T$ as opposed to hypotheses that are optimal for all other tasks. Since $h^*_i=\arg\min_h\mathcal{L}(h,\mathcal{D}_i)$, Theorem \ref{thm:forgetting-asymptotic-cov} is, by definition, optimally tight for $m\rightarrow \infty$ assuming that the determinant is low.

We additionally show in Appendix \ref{append:proofs} that the following non-asymptotic oracle bound holds.
\begin{theorem} $\mathrm{(Forgetting, Generalization)}$ 
\label{thm:forgetting-extended}
For any $\lambda_T>0$, assuming all $Q_{1:j}$ are empirical Gibbs posteriors, and that
 $\mathrm{cov}_{P}(i, [T])\geq 0$,
for any sample of training sets $S_{j}\sim \mathcal{D}_j$, then $\forall i\in[T]$
\begin{align} \label{eq:forgetting-extended}
\begin{split}
\mathbb{E}_{S_i}\mathcal{L}(\hat{Q}^{\lambda_T}_{1:T}, \mathcal{D}_i) &\leq \frac{\lambda_T K^2}{8m_i}+\mathcal{L}(P,\mathcal{D}_i).
\end{split}
\end{align}
\end{theorem}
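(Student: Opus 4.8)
The plan is to reproduce, for the telescoped Gibbs chain, the route behind Corollary~\ref{thm:cov-2task}. First I would unroll the recursion: from $\hat{Q}^{\lambda_j}_{1:j}\propto \hat{Q}^{\lambda_{j-1}}_{1:j-1}\,e^{-\lambda_j\hat{\mathcal{L}}(h,S_j)}$ and $\hat{Q}^{\lambda_1}_{1:1}\propto P e^{-\lambda_1\hat{\mathcal{L}}(h,S_1)}$ one gets $\hat{Q}^{\lambda_T}_{1:T}(h)\propto P(h)\,W(h)$ with $W(h)=\prod_{j=1}^{T}e^{-\lambda_j\hat{\mathcal{L}}(h,S_j)}$ and normaliser $Z_T=\mathbb{E}_{h\sim P}[W(h)]$. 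Substituting $Q_{1:T}=\hat{Q}^{\lambda_T}_{1:T}$ and $Q_{1:T-1}=\hat{Q}^{\lambda_{T-1}}_{1:T-1}$ into \eqref{eq:forget-base-T} and using the closed form $D_{\mathrm{KL}}(\hat{Q}^{\lambda_T}_{1:T}\|\hat{Q}^{\lambda_{T-1}}_{1:T-1})=-\lambda_T\hat{\mathcal{L}}(\hat{Q}^{\lambda_T}_{1:T},S_T)-\log\mathbb{E}_{h\sim\hat{Q}^{\lambda_{T-1}}_{1:T-1}}[e^{-\lambda_T\hat{\mathcal{L}}(h,S_T)}]$, the two empirical-loss-on-$S_T$ terms cancel; rewriting the remaining two log-expectations under $P$ (the $Z_{T-1}$ factors cancel) yields $\mathcal{L}(\hat{Q}^{\lambda_T}_{1:T},\mathcal{D}_i)\le\frac{1}{\lambda_T}\log\frac{\mathbb{E}_{h\sim P}[e^{\lambda_T\mathcal{L}(h,\mathcal{D}_i)}W(h)]}{Z_T}$, the $T$-task analogue of the expression behind \eqref{eq:thm-start-of-no-forget}.

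Next I would peel off task $i$. Write $W(h)=e^{-\lambda_i\hat{\mathcal{L}}(h,S_i)}W_{-i}(h)$ with $W_{-i}(h)=\prod_{j\ne i}e^{-\lambda_j\hat{\mathcal{L}}(h,S_j)}$, and let $\hat{Q}_{-i}\propto W_{-i}P$ be the Gibbs posterior built from all tasks but $i$ (it does not depend on $S_i$). Then the ratio above equals $\mathbb{E}_{h\sim\hat{Q}_{-i}}[e^{\lambda_T\mathcal{L}(h,\mathcal{D}_i)}e^{-\lambda_i\hat{\mathcal{L}}(h,S_i)}]\big/\mathbb{E}_{h\sim\hat{Q}_{-i}}[e^{-\lambda_i\hat{\mathcal{L}}(h,S_i)}]$. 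This is where $\mathrm{cov}_P(i,[T])\ge 0$ enters, in direct parallel with how $\mathrm{cov}_{\lambda_t}(P,s,t)\ge 0$ is used in Corollary~\ref{thm:cov-2task}: it certifies that swapping $\hat{Q}_{-i}$ for the data-free prior $P$ only enlarges this ratio, giving $\mathcal{L}(\hat{Q}^{\lambda_T}_{1:T},\mathcal{D}_i)\le\frac{1}{\lambda_T}\log\frac{\mathbb{E}_{h\sim P}[e^{\lambda_T\mathcal{L}(h,\mathcal{D}_i)}e^{-\lambda_i\hat{\mathcal{L}}(h,S_i)}]}{\mathbb{E}_{h\sim P}[e^{-\lambda_i\hat{\mathcal{L}}(h,S_i)}]}$, a quantity depending only on task $i$. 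As in the two-task regime (where $Q_s$ was taken to be $\hat{Q}^{\lambda_t}_s$) the inverse temperatures are matched, $\lambda_i=\lambda_T$; otherwise a residual $\frac{1}{\lambda_T}\log\mathbb{E}_{h\sim P}[e^{(\lambda_T-\lambda_i)\mathcal{L}(h,\mathcal{D}_i)}]$ survives.

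Finally I would take $\mathbb{E}_{S_i}$. In the numerator, concavity of $\log$ moves the expectation inside, and Hoeffding's lemma for the average $\hat{\mathcal{L}}(h,S_i)$ of $m_i$ independent $[0,K]$-valued terms gives $\mathbb{E}_{S_i}[e^{-\lambda_i\hat{\mathcal{L}}(h,S_i)}]\le e^{-\lambda_i\mathcal{L}(h,\mathcal{D}_i)+\lambda_i^2K^2/(8m_i)}$; since $\lambda_i=\lambda_T$ the factors $e^{\lambda_T\mathcal{L}(h,\mathcal{D}_i)}e^{-\lambda_i\mathcal{L}(h,\mathcal{D}_i)}$ collapse to $1$, leaving the contribution $\lambda_TK^2/(8m_i)$. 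In the denominator, Jensen's inequality in $h$ gives $\mathbb{E}_{S_i}\log\mathbb{E}_{h\sim P}[e^{-\lambda_i\hat{\mathcal{L}}(h,S_i)}]\ge-\lambda_i\mathcal{L}(P,\mathcal{D}_i)$, i.e.\ the contribution $\mathcal{L}(P,\mathcal{D}_i)$. Adding the two yields \eqref{eq:forgetting-extended}.

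The hard part will be the covariance step: converting the stated sign condition on $\mathrm{cov}_P(i,[T])$ into the precise inequality ``replacing $\hat{Q}_{-i}$ by $P$ increases the ratio'' — these are morally the same task-similarity statement, but connecting them cleanly needs a short correlation/monotonicity lemma (and, as the text notes, the negative-covariance case is genuinely lost). Relatedly, one must perform this reduction \emph{before} taking any expectation, so that the Jensen/Hoeffding slack is paid exactly once and only for task $i$; that is what makes the bound scale with $m_i$ rather than with $m_T$ or a sum over all tasks, and is essentially the only place the multi-task structure of the chain, beyond a single Gibbs step, is used.
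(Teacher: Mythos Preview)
Your unrolling of the Gibbs chain and the reduction to
\[
\mathcal{L}(\hat{Q}^{\lambda_T}_{1:T},\mathcal{D}_i)\le\frac{1}{\lambda_T}\log\frac{\mathbb{E}_{h\sim P}[e^{\lambda_T\mathcal{L}(h,\mathcal{D}_i)}W(h)]}{\mathbb{E}_{h\sim P}[W(h)]}
\]
match the paper exactly. The gap is in the covariance step. The condition $\mathrm{cov}_P(i,[T])\ge 0$ says only that $\mathbb{E}_P[gW_{-i}]\ge\mathbb{E}_P[g]\,\mathbb{E}_P[W_{-i}]$ where $g=e^{-\lambda_i\hat{\mathcal L}(h,S_i)}$; equivalently $\mathbb{E}_{\hat Q_{-i}}[g]\ge\mathbb{E}_P[g]$. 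It does \emph{not} imply your claimed inequality
\[
\frac{\mathbb{E}_{\hat Q_{-i}}[fg]}{\mathbb{E}_{\hat Q_{-i}}[g]}\le\frac{\mathbb{E}_P[fg]}{\mathbb{E}_P[g]},
\]
because $f=e^{\lambda_T\mathcal{L}(h,\mathcal{D}_i)}$ is still present. A two-point counterexample: take $P$ uniform on $\{h_1,h_2\}$, $f=(10,1)$, $g=(1,0.5)$, $W_{-i}=(1,0.1)$. Then $\mathrm{cov}_P(g,W_{-i})>0$, but the left ratio is $\approx 9.57$ while the right is $7$. So no ``short correlation/monotonicity lemma'' can rescue this as stated; the hypothesis is simply too weak for the swap you want.

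The paper avoids this by \emph{reversing} the order you insist on in your last paragraph. It first takes $\mathbb{E}_{S_i}$ and applies Jensen/Hoeffding \emph{only to the numerator}: with $\lambda_i=\lambda_T$, $\mathbb{E}_{S_i}\mathbb{E}_P[e^{\lambda_T\mathcal{L}(h,\mathcal{D}_i)-\lambda_i\hat{\mathcal L}(h,S_i)}W_{-i}]\le e^{\lambda_T^2K^2/(8m_i)}\mathbb{E}_P[W_{-i}]$. Crucially, this eliminates $f$ before the covariance is invoked. The remaining ratio $\mathbb{E}_P[W_{-i}]/\mathbb{E}_P[gW_{-i}]$ is exactly what $\mathrm{cov}_P(i,[T])\ge 0$ bounds by $1/\mathbb{E}_P[g]$, after which $\mathbb{E}_P[W_{-i}]$ cancels and a final Jensen on $-\log\mathbb{E}_P[g]$ gives $\mathcal{L}(P,\mathcal{D}_i)$. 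The Hoeffding slack is still paid exactly once (on task $i$ only), so your worry that the reduction ``must'' precede the expectation is unfounded.
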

Theorem \ref{thm:forgetting-extended} is somewhat surprising, as it implies a sufficient condition for learning without forgetting that, for each individual task, does not become worse with the number of tasks and has no direct dependence on task order or on the length of time a task has not been seen. While the r.h.s.~in \eqref{eq:forgetting-extended} contains a constant term $\mathcal{L}(P,\mathcal{D}_i)$, this term does not depend on the number of total tasks $T$. 

Although the resulting upper bound is potentially quite loose, it can be useful if we assume that the prior $P$ is some pre-trained model that already achieves low loss on all tasks. In this case, the Gibbs posterior allows us to iteratively fine-tune the pre-trained model while guaranteeing that the backward transfer loss is no worse than the performance of the original pre-trained model.

%This lack of dependence on task order can be attributed to the nature of the empirical Gibbs distribution that applies a combination of exponential weights on the initial distribution $P$. The final weight of any given hypothesis $h$ therefore does not depend on the order of tasks but rather only on $P(h)$ and its empirical performance on each task.

%The fact that this bound does not degrade with the number of tasks is a result of the assumption on the non-negative covariance: we assume that hypotheses that perform well on task $i$ tend to perform well on all other tasks, and thus the exponential weighting scheme does not significantly reduce their probability in the final distribution. While this is a somewhat strong assumption, it is weaker than assuming that a single optimal solution to all tasks exists.

%=========================================================================
\section{Empirical validation} \label{sec:empirical}

In this section we demonstrate the utility of our bounds on both synthetic and real-world data sets. We study three classes of task environments, with very different types of shifts between tasks, and show, that for sufficiently many tasks, the bounds are tight. 
Since Corollary \ref{thm:forget-sumT-final} applies uniformly over all posteriors, it can be optimized at each time step w.r.t.~the current posterior.
Specifically, we select \begin{equation}    \label{eq:pb-update}
\hat{Q}_{1:i}=\argmin_{Q_{1:i}}\left \{\hat{\mathcal{L}}(Q_{1:i},S_i)+\frac{1}{\lambda_i}D_{\mathrm{KL}}(Q_{1:i}||Q_{1:i-1})\right \}.\end{equation}
This optimization process yields a theoretically motivated stochastic optimization algorithm, detailed in Algorithm \ref{alg:empirical-forgetting}.
\begin{algorithm}[H]
	\caption{PAC-Bayes Continual Learning}
	\label{alg:empirical-forgetting}
	\small
	\begin{algorithmic}[1]
		\STATE {\bfseries Continual-learn} ($S_1,\ldots, S_T$, $P$)
		\STATE Choose algorithmic parameters $\lambda_1,\ldots,\lambda_T$
		\STATE Let $\hat{Q}_{1:0}(h) \triangleq P(h)$
		\FOR {each task $t$ from $1$ to $T$} 
            \STATE Update $\hat{Q}_{1:t}$ via \eqref{eq:pb-update}, with $Q_{1:i-1}=\hat{Q}_{1:t-1}$
		\ENDFOR
		\STATE {\bfseries Return} $\hat{Q}_{1:T}$
	\end{algorithmic}
\end{algorithm}
This algorithm can be approximated in practice via variational inference (VI) using multivariate Gaussian prior and posterior distributions on model parameters.  The main benefit of doing so is to simplify the estimation of the KL-divergence. Other distributions are possible, but Gaussian distributions are common in VI \citep{hoffman2013stochastic}. The posterior can be used to estimate the forgetting and its upper bound by using a pre-defined test set as a proxy for the data distribution $\mathcal{D}_i$. Calculating the upper bound \eqref{eq:final-forget-bound} would add an additional calculation of $O(t\cdot (m+d))$ to each task, where $m=|S_t|$ and $d$ is the number of model parameters. In total, calculating the bound at each task will add $O(T^2\cdot (m+d))$. Calculating the bound does not require additional memory or samples (samples from previous test tasks are also required for calculating test forgetting).

We also examine our bounds for a deterministic continual learning algorithm, namely Elastic Weight Consolidation (EWC) \citep{kirkpatrick2017overcoming}. Since EWC outputs deterministic parameters for each task, a posterior distribution is constructed by adding Multivariate Gaussian noise, i.e. $\hat{Q}_{1:i}=\mathcal{N}(w_i,\sigma^2 I_d)$, where $w_i\in \mathcal{R}^d$ is the weight vector given as output after task $\mathcal{D}_i$. By doing so, the KL-divergence can be easily calculated as $$KL(\hat{Q}_{1:i}||\hat{Q}_{1:i-1})=\frac{1}{2\sigma^2}||w_i-w_{i-1}||^2_2.$$

%\subsection{Experimental settings}
\paragraph{Experimental settings}
In order to measure the bound tightness for VI and EWC, we must define the relevant metrics we wish to measure. These are the backwards transfer $BWT(\hat{Q}_{1:T},[T])$ (measured via a separate test set per task) and the forgetting $F(\hat{Q}_{1:T},[T])$, as well as the average forward transfer $\frac{1}{T}\sum_{t=1}^{T}\mathcal{L}(\hat{Q}_{1:t}, \mathcal{D}_t)$.

In the following experiments, we use fully connected neural networks with a separate linear head per task to facilitate measuring the forgetting without re-training linear heads. We use Adam \citep{KingmaB14} for optimization. The full list of hyper-parameters is listed in Appendix \ref{append:hyperparam}. 
%
% \subsection{Simple tasks: \texorpdfstring{$10d$}{10d} Gaussian data}\label{sec:Gaussian-data}
\paragraph{10d Gaussian data} 
We begin by examining several setups of binary classification tasks in $\mathbb{R}^{10}$. All tasks draw samples from a $10$-dimensional Gaussian distribution $p(x) = \mathcal{N}(x;0,I_{10})$ and $y=\mathrm{sgn}(a^\top x)$. To simplify, only the first two features were used to determine $y$, meaning that there is a $2d$ linear separator embedded in $\mathbb{R}^{10}$.
We consider the following settings:
(1) Similar tasks: linear separators for all tasks are within $10^\circ$ of some reference angle.
(2) Gradual shift: tasks change angle gradually in a set direction, with each consecutive task being within $10^\circ$ of the previous one.
(3) Orthogonal shift: Tasks originate from two distinct, orthogonal directions, with the first half of tasks being from the first angle and the latter half corresponding to the second angle.  

We used a total of $T=100$ tasks in this domain in order to observe general trends.
We note that these problems differ significantly in task order and overall behavior, and are aimed at providing a diverse set of challenges. In addition, we should not expect a single algorithm to perform optimally across all settings, as known empirical methods based on regularization would likely perform poorly on orthogonal tasks whereas replay-based and architecture-based methods would have issues with gradual shift. We expect that in real-world settings, prior knowledge can be used to suggest specific algorithms.
In particular, the settings of Gradual shift and Orthogonal shifts are constructed such that forgetting may be desirable. A recent paper by \citet{kumar2023continual} discusses continual learning as a computationally constrained optimization problem and argues that forgetting non-recurring information is not ``catastrophic", especially given changing environments. In both the gradual and orthogonal shift settings, not all previous knowledge is beneficial.

\begin{table*}[t]
\caption{Metrics for $10d$ continual tasks at $T=100$.} 
\label{2d-full-table}
\vskip 0.1in
\begin{center}
\begin{small}
\begin{sc}
\begin{tabular}{lccccc}
Method  &BWT &BWT bound &Forgetting &Forgetting bound \eqref{eq:final-forget-bound} &Test loss \\
\midrule
 Similar tasks   \\
\midrule
EWC    & $2.0 \pm 0.1$ & $3.5 \pm 0.1$ & $0.4 \pm 0.0$ & $1.8 \pm 0.1$ & $1.8 \pm 0.4$  \\
VI & $\textbf{0.8}\pm 0.2$ & $\textbf{2.2}\pm 0.2$ & $\textbf{-1.9} \pm 0.1$ & $\textbf{-0.5} \pm 0.1$ & $\textbf{1.0}\pm 0.5$ \\
\midrule
 Gradual shift   \\
 \midrule
EWC    & $1.9 \pm 0.1$ & $3.3 \pm 0.1$ & $0.3 \pm 0.1$ & $1.7 \pm 0.1$ & $2.1 \pm 0.2$  \\
VI & $\textbf{0.2}\pm 0.0$ & $\textbf{1.5}\pm 0.0$ & $\textbf{-2.2} \pm 0.0$ & $\textbf{-0.8} \pm 0.0$ & $\textbf{0.2}\pm 0.1$ \\
\midrule
 Orthogonal shift   \\
 \midrule
EWC    & $4.1 \pm 0.8$ & $5.5 \pm 0.9$ & $0.8 \pm 0.2$ & $2.2 \pm 0.2$ & $4.6 \pm 0.9$  \\
VI & $\textbf{3.3}\pm 0.8$ & $\textbf{4.7} \pm 0.8$ & $\textbf{-0.0} \pm 0.7$ & $\textbf{1.4} \pm 0.7$ & $\textbf{0.5}\pm 0.1$ \\
\end{tabular}
\end{sc}
\end{small}
\end{center}
\vskip -0.1in
\end{table*}

\begin{figure*}[t]
%\vspace{.3in}
%\centerline{}
\centering
\subfigure[Gradual shift\label{10d-sim}]{\includegraphics[width=0.33\textwidth]{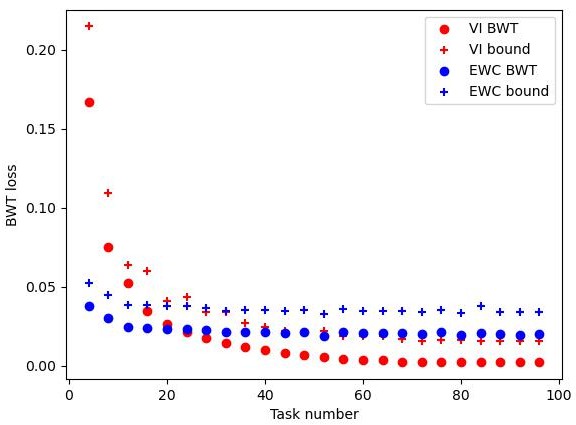}} 
\subfigure[Orthogonal shift at $t=50$\label{10d-shift}]{\includegraphics[width=0.33\textwidth]{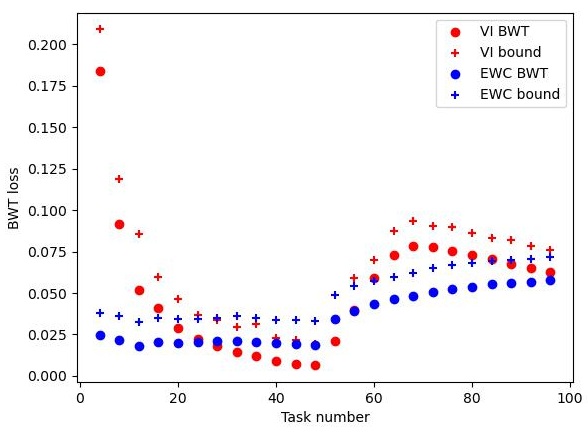}} 
%\vspace{.3in}
\caption{Forgetting and upper bound \eqref{eq:final-forget-bound} over time ($t=4$ to $t=96$) for $10d$ data. }%(a) Gradual shift (b) Task shift at $t=50$.} 
\label{10d-forget}
\end{figure*}

Table \ref{2d-full-table} describes the forward and backward transfer as well as forgetting for the $10d$ tasks. 
All test metrics are in loss percentage over the relevant models and test sets averaged over $5$ seeds. Backward transfer is the final loss percentage over all test sets. Standard error is reported after the $\pm$ sign. Best result for each setting is in bold. We can see that for most settings, VI based on Algorithm \ref{alg:empirical-forgetting} tends to provide good forward transfer, as well as backward transfer and forgetting. For the orthogonal shift setting, we can see that both backward transfer and forgetting degrade. Figure \ref{10d-forget} compares the forgetting for the gradual shift setting compared to the more abrupt orthogonal shift at $T=50$. We can clearly see that the backward transfer increases abruptly at the point of task shift, though the upper bound remains roughly the same in terms of tightness as the task discrepancy term is negative and corrects for the higher KL-divergence at the task shift.

The forgetting for VI is negative and seems to follow a trend of increasing gradually towards zero forgetting. For the orthogonal shift setting, the forgetting becomes positive and decreases towards zero.
For EWC, the forgetting is positive and decreases gradually towards zero.
We observe that as the shared model improves, forgetting decreases, likely due to the learned shared representation improving overall across several tasks, and we see the forgetting trend towards zero. 
An interesting observation is that the backward transfer (and forgetting) for the orthogonal shift setting is much higher. In this setting, forgetting may be a desired phenomenon, as the data distribution shifts in the middle of the learning process. We see that indeed this shift leads to higher forgetting. 

In appendix \ref{append:hyperparam} we also consider the discounted backward transfer and forgetting, using the formulae
\begin{align*}\begin{split}
&BWT^{\gamma}(Q_{1:T},[T])\triangleq \sum_{t=1}^{T-1}\gamma^{T-1-t}\mathcal{L}(Q_{1:T},\mathcal{D}_t),\quad
    F^{\gamma}(Q_{1:T},[T])\triangleq \sum_{t=1}^{T-1}\gamma^{T-1-t}(\mathcal{L}(Q_{1:T},\mathcal{D}_t)-\mathcal{L}(Q_{1:t}), \mathcal{D}_t).
\end{split}\end{align*}
These discounted forms allow us to re-weigh the forgetting based on recency, as achieving low forgetting on recent tasks may be more important in some settings than low overall forgetting. For $\gamma=0.95$, the discounted backward transfer for VI is lower, and the forgetting is higher. This is likely due to the shared model improving over time. For EWC, the backward transfer increases until a low steady state, and forgetting decreases gradually.
\begin{figure*}[ht!]
%\vspace{.3in}
%\centerline{}
\centering
\subfigure{\includegraphics[width=0.25\textwidth]{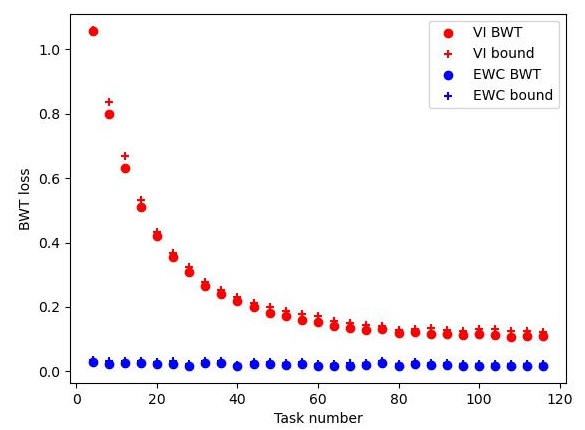}}
\subfigure{\includegraphics[width=0.25\textwidth]{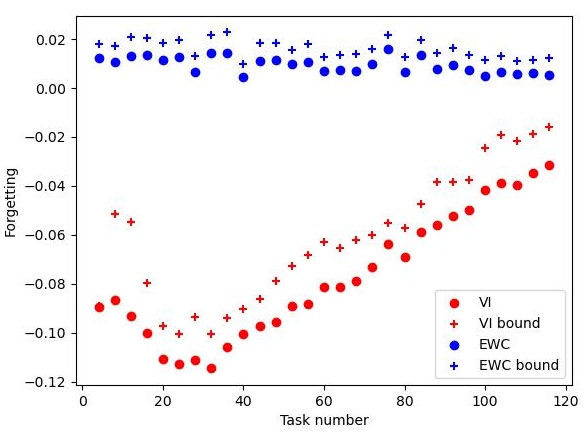}}
\subfigure{\includegraphics[width=0.25\textwidth]{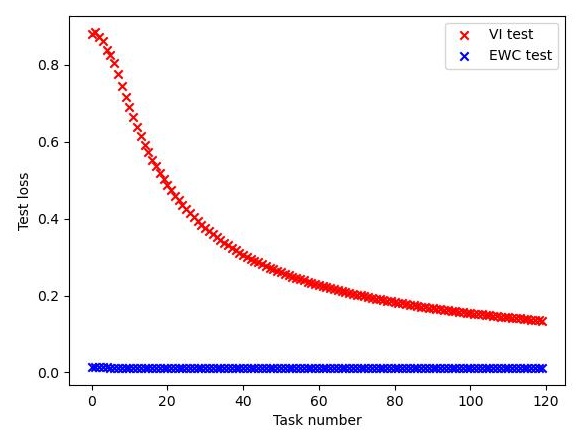}}
\\%
\subfigure{\includegraphics[width=0.25\textwidth]{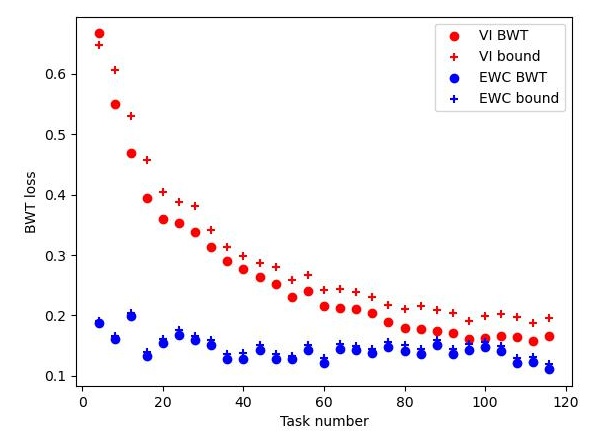}} 
\subfigure{\includegraphics[width=0.25\textwidth]{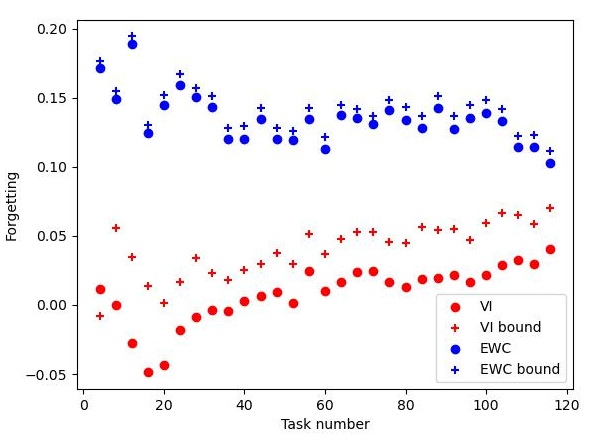}} 
\subfigure{\includegraphics[width=0.25\textwidth]{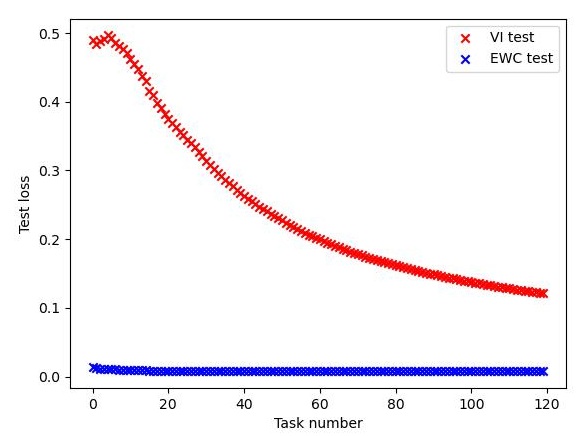}}
\\%
\addtocounter{subfigure}{-6}
\subfigure[BWT\label{bwt}]{\includegraphics[width=0.25\textwidth]{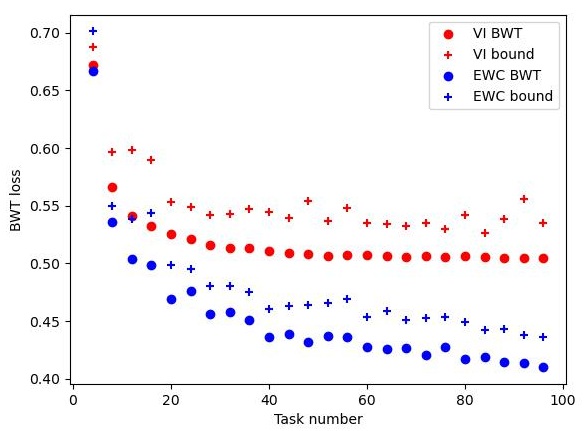}}
\subfigure[Forgetting\label{forgetting}]{\includegraphics[width=0.25\textwidth]{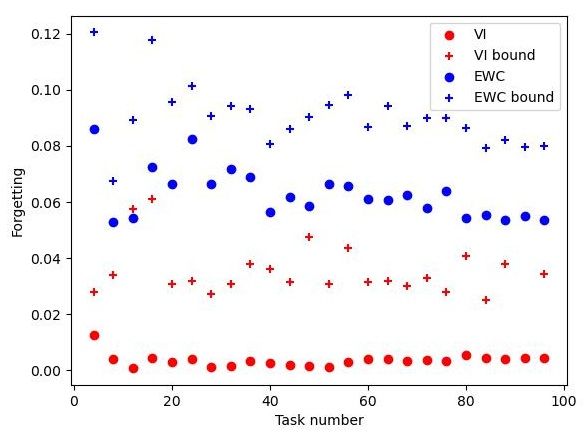}}
\subfigure[Test loss\label{test}]{\includegraphics[width=0.25\textwidth]{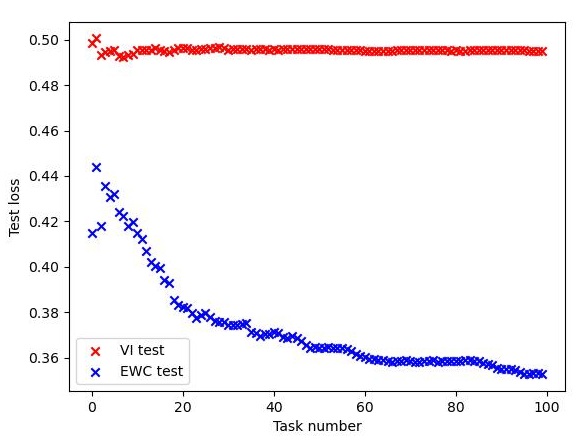}}
%\vspace{.3in}
\caption{Metrics over time for permuted-MNIST (top row), split-MNIST (middle row) and split-CIFAR10 (bottom row). (a) backward transfer \& upper bound (b) forgetting \& upper bound \eqref{eq:final-forget-bound} (c) average (forward) test error over time.} 
\label{mnist-all}
\end{figure*}

\paragraph{Vision tasks} 
% \subsection{Vision tasks: CIFAR-\texorpdfstring{$10$}{10}}
Next we examine our bounds for our approximation of Algorithm \ref{alg:empirical-forgetting} and EWC on more realistic problem domains, namely permuted-MNIST \citep{goodfellow2015empirical}, a domain-incremental continual learning problem \citep{de2021continual} where a random permutation of the pixels is applied on each image for each task, split-MNIST \citep{zenke2017continual}, a sequential set of binary classification tasks constructed from the MNIST dataset, and split-CIFAR10 \citep{zenke2017continual}, a sequential set of binary classification tasks constructed from the CIFAR-10 dataset. We used convolutional neural networks for these visual domains, with $T=120$ tasks in total for MNIST domains and $T=100$ for split-CIFAR10. For these more challenging tasks, our practical approximation of Algorithm \ref{alg:empirical-forgetting}, that works well for synthetic data, does not perform as well as EWC in terms of forward and backward transfer.

Figure \ref{bwt} shows the gradual change in backward transfer.
We can see that after an initial warm-up period, all settings have gradually improving backward transfer as well as tight upper bounds \eqref{eq:forget-base-T}, especially for EWC.
Figure \ref{test} shows that for both MNIST settings, the test error for VI decreases with additional tasks. This suggests that the shared representation has sufficient network capacity to learn a useful representation for all MNIST vision tasks. For EWC the loss also decreases, but rapidly reaches near zero. For the split-CIFAR10 setting, we see that VI fails to learn useful hypotheses, and EWC performs reasonably well.

Figure \ref{forgetting} shows the forgetting and its upper bound \eqref{eq:final-forget-bound} over time. For both MNIST problems, we see similar behavior to that of the $10d$ Gaussian data, where the forgetting trends upwards toward zero. For split-CIFAR10 we see the forgetting for EWC remain at a mostly constant low level as the backward transfer and test losses improve slowly. We see that the bounds for split-CIFAR10 are reasonably tight. The poor performance of VI compared to EWC on the split-CIFAR10 can be attributed to a combination of low computational budget and modeling choices. VI methods often require additional computation epochs compared to deterministic neural networks, and computational parity with EWC was prioritized. It may be the case that the approximation via an isotropic Gaussian prior served to limit the search space for optimization.
The full tables of the results appear in Appendix \ref{append:hyperparam}.

\section{Limitations}
Our upper bounds assume that tasks are sampled independently of the learning process. This assumption is necessary for our proofs and while reasonable for supervised learning (outside of curriculum learning), would not hold under adversarial task constructions. 
We also assume that the loss function $\ell(h,z)$ is task-independent. This assumption can likely be relaxed via separate loss functions per task without invalidating the proof.
Our bounds assume that the the continual learner is exemplar-free (data from previous tasks is not directly available). It may be the case that additional data availability may be used to construct tighter bounds.
Additional assumptions are listed throughout the text, with many of our results assuming that the loss is bounded (Assumption \ref{assume:bounded-loss}). Most of these results can be extended to low-tailed losses, such as sub-Gaussian losses, but would not hold for heavy-tailed losses.

Our empirical results are mainly focused on measuring the tightness of our upper bounds. As such, we focused on relatively small neural networks and reasonably simple supervised problems in order to validate our bounds in practice. Since the KL-divergence is part of the upper bound \eqref{eq:final-forget-bound}, it may be the case that this bound will become looser for larger models, as the KL-divergence tends to increase with the number of parameters. 
Compared to standard PAC-Bayes bounds this issue is somewhat alleviated due to only requiring the KL-divergence for the final model, meaning that this term can be mitigated somewhat by ensuring that model changes lessen over time.
Outside of our synthetic dataset, all of our experiments used vision datasets, and we only examined our bounds on our simple stochastic algorithm and EWC, and therefore our experiments should be considered preliminary in scope.

\section{Conclusions} %and future work}
In this work, we derived several upper bounds on the test forgetting (via backward transfer) for both general model classes and for the Gibbs posterior, based on the change of measure approach.
These upper bounds are data-dependent, offering tighter bounds if improved prior models for the task are available, or if the loss landscapes for tasks are similar. In particular, we focused on oracle bounds for Gibbs posteriors that offered tight bounds on backward transfer if task losses are highly positively correlated, thus making the knowledge accumulation process highly effective for all tasks.

Based on our theoretical bounds, we constructed an algorithm for continual learning with empirically low forgetting and tight upper bounds. We examined our bounds for both this algorithm and EWC on several simple task constructions as well as slightly more complex vision tasks. In our experiments, we noted a relation between forward and backward transfer, especially for mostly static settings such as the domain-incremental continual learning problem. As noted by several previous theoretical works and practical examinations (see Introduction), task order and similarity can greatly influence both forgetting and generalization. While this empirical demonstration is not the main focus of our paper, it suggests that weighting schemes based on notions of loss agreement merit further exploration for the domain incremental setting.

\section*{Acknowledgements}
This work was supported by the Israel Science Foundation grant number $1693/22$.

\clearpage

\bibliography{library}

\begin{thebibliography}{31}
\providecommand{\natexlab}[1]{#1}
\providecommand{\url}[1]{\texttt{#1}}
\expandafter\ifx\csname urlstyle\endcsname\relax
  \providecommand{\doi}[1]{doi: #1}\else
  \providecommand{\doi}{doi: \begingroup \urlstyle{rm}\Url}\fi

\bibitem[Alquier(2024)]{alquier2021user}
Pierre Alquier.
\newblock User-friendly introduction to pac-bayes bounds.
\newblock \emph{Foundations and Trends® in Machine Learning}, 17\penalty0
  (2):\penalty0 174--303, 2024.
\newblock ISSN 1935-8237.
\newblock \doi{10.1561/2200000100}.

\bibitem[Alquier et~al.(2016)Alquier, Ridgway, and
  Chopin]{alquier2016properties}
Pierre Alquier, James Ridgway, and Nicolas Chopin.
\newblock On the properties of variational approximations of gibbs posteriors.
\newblock \emph{The Journal of Machine Learning Research}, 17\penalty0
  (1):\penalty0 8374--8414, 2016.

\bibitem[Amit \& Meir(2018)Amit and Meir]{amit2018meta}
Ron Amit and Ron Meir.
\newblock Meta-learning by adjusting priors based on extended pac-bayes theory.
\newblock In \emph{International Conference on Machine Learning}, pp.\
  205--214. PMLR, 2018.

\bibitem[Bartlett et~al.(2019)Bartlett, Harvey, Liaw, and
  Mehrabian]{bartlett2019nearly}
Peter~L Bartlett, Nick Harvey, Christopher Liaw, and Abbas Mehrabian.
\newblock Nearly-tight vc-dimension and pseudodimension bounds for piecewise
  linear neural networks.
\newblock \emph{The Journal of Machine Learning Research}, 20\penalty0
  (1):\penalty0 2285--2301, 2019.

\bibitem[Benavides-Prado \& Riddle(2022)Benavides-Prado and
  Riddle]{benavides2022theory}
Diana Benavides-Prado and Patricia Riddle.
\newblock A theory for knowledge transfer in continual learning.
\newblock In \emph{Conference on Lifelong Learning Agents}, pp.\  647--660.
  PMLR, 2022.

\bibitem[Bennani \& Sugiyama(2020)Bennani and
  Sugiyama]{bennani2020generalisation}
Mehdi~Abbana Bennani and Masashi Sugiyama.
\newblock Generalisation guarantees for continual learning with orthogonal
  gradient descent.
\newblock In \emph{4th Lifelong Machine Learning Workshop at ICML 2020}, 2020.

\bibitem[Casella \& George(1992)Casella and George]{casella1992explaining}
George Casella and Edward~I George.
\newblock Explaining the gibbs sampler.
\newblock \emph{The American Statistician}, 46\penalty0 (3):\penalty0 167--174,
  1992.

\bibitem[Catoni(2004)]{Catoni2004}
Olivier Catoni.
\newblock {A PAC-Bayesian approach to adaptive classification}.
\newblock \emph{preprint 840}, 2004.

\bibitem[Chugg et~al.(2023)Chugg, Wang, and Ramdas]{chugg2023unified}
Ben Chugg, Hongjian Wang, and Aaditya Ramdas.
\newblock A unified recipe for deriving (time-uniform) pac-bayes bounds.
\newblock \emph{Journal of Machine Learning Research}, 24\penalty0
  (372):\penalty0 1--61, 2023.

\bibitem[De~Lange et~al.(2021)De~Lange, Aljundi, Masana, Parisot, Jia,
  Leonardis, Slabaugh, and Tuytelaars]{de2021continual}
Matthias De~Lange, Rahaf Aljundi, Marc Masana, Sarah Parisot, Xu~Jia,
  Ale{\v{s}} Leonardis, Gregory Slabaugh, and Tinne Tuytelaars.
\newblock A continual learning survey: Defying forgetting in classification
  tasks.
\newblock \emph{IEEE transactions on pattern analysis and machine
  intelligence}, 44\penalty0 (7):\penalty0 3366--3385, 2021.

\bibitem[Doan et~al.(2021)Doan, Bennani, Mazoure, Rabusseau, and
  Alquier]{doan2021theoretical}
Thang Doan, Mehdi~Abbana Bennani, Bogdan Mazoure, Guillaume Rabusseau, and
  Pierre Alquier.
\newblock A theoretical analysis of catastrophic forgetting through the ntk
  overlap matrix.
\newblock In \emph{International Conference on Artificial Intelligence and
  Statistics}, pp.\  1072--1080. PMLR, 2021.

\bibitem[Donsker \& Varadhan(1975)Donsker and Varadhan]{donsker1975large}
MD~Donsker and SRS Varadhan.
\newblock Large deviations for markov processes and the asymptotic evaluation
  of certain markov process expectations for large times.
\newblock In \emph{Probabilistic Methods in Differential Equations}, pp.\
  82--88. Springer, 1975.

\bibitem[Evron et~al.(2022)Evron, Moroshko, Ward, Srebro, and
  Soudry]{evron2022catastrophic}
Itay Evron, Edward Moroshko, Rachel Ward, Nathan Srebro, and Daniel Soudry.
\newblock How catastrophic can catastrophic forgetting be in linear regression?
\newblock In \emph{Conference on Learning Theory}, pp.\  4028--4079. PMLR,
  2022.

\bibitem[Goodfellow et~al.(2015)Goodfellow, Mirza, Courville, and
  Bengio]{goodfellow2015empirical}
Ian~J Goodfellow, Mehdi Mirza, Aaron Courville, and Yoshua Bengio.
\newblock An empirical investigation of catastrophic forgetting in
  gradient-based neural networks.
\newblock \emph{stat}, 1050:\penalty0 4, 2015.

\bibitem[Haddouche \& Guedj(2022)Haddouche and Guedj]{haddouche2022online}
Maxime Haddouche and Benjamin Guedj.
\newblock Online pac-bayes learning.
\newblock \emph{Advances in Neural Information Processing Systems},
  35:\penalty0 25725--25738, 2022.

\bibitem[Haddouche \& Guedj(2023)Haddouche and Guedj]{haddouche2023pac}
Maxime Haddouche and Benjamin Guedj.
\newblock Pac-bayes generalisation bounds for heavy-tailed losses through
  supermartingales.
\newblock \emph{Transactions on Machine Learning Research}, 2023.

\bibitem[Hadsell et~al.(2020)Hadsell, Rao, Rusu, and
  Pascanu]{hadsell2020embracing}
Raia Hadsell, Dushyant Rao, Andrei~A Rusu, and Razvan Pascanu.
\newblock Embracing change: Continual learning in deep neural networks.
\newblock \emph{Trends in cognitive sciences}, 24\penalty0 (12):\penalty0
  1028--1040, 2020.

\bibitem[Hoffman et~al.(2013)Hoffman, Blei, Wang, and
  Paisley]{hoffman2013stochastic}
Matthew~D Hoffman, David~M Blei, Chong Wang, and John Paisley.
\newblock Stochastic variational inference.
\newblock \emph{the Journal of machine Learning research}, 14\penalty0
  (1):\penalty0 1303--1347, 2013.

\bibitem[Kingma \& Ba(2015)Kingma and Ba]{KingmaB14}
Diederik~P. Kingma and Jimmy Ba.
\newblock Adam: {A} method for stochastic optimization.
\newblock In Yoshua Bengio and Yann LeCun (eds.), \emph{3rd International
  Conference on Learning Representations, {ICLR} 2015, San Diego, CA, USA, May
  7-9, 2015, Conference Track Proceedings}, 2015.

\bibitem[Kirkpatrick et~al.(2017)Kirkpatrick, Pascanu, Rabinowitz, Veness,
  Desjardins, Rusu, Milan, Quan, Ramalho, Grabska-Barwinska,
  et~al.]{kirkpatrick2017overcoming}
James Kirkpatrick, Razvan Pascanu, Neil Rabinowitz, Joel Veness, Guillaume
  Desjardins, Andrei~A Rusu, Kieran Milan, John Quan, Tiago Ramalho, Agnieszka
  Grabska-Barwinska, et~al.
\newblock Overcoming catastrophic forgetting in neural networks.
\newblock \emph{Proceedings of the national academy of sciences}, 114\penalty0
  (13):\penalty0 3521--3526, 2017.

\bibitem[Kumar et~al.(2023)Kumar, Marklund, Rao, Zhu, Jeon, Liu, and
  Roy]{kumar2023continual}
Saurabh Kumar, Henrik Marklund, Ashish Rao, Yifan Zhu, Hong~Jun Jeon, Yueyang
  Liu, and Benjamin~Van Roy.
\newblock Continual learning as computationally constrained reinforcement
  learning.
\newblock \emph{CoRR}, abs/2307.04345, 2023.
\newblock \doi{10.48550/ARXIV.2307.04345}.

\bibitem[Li et~al.(2024)Li, Lin, Duan, Liang, and Shroff]{li2024theory}
Hongbo Li, Sen Lin, Lingjie Duan, Yingbin Liang, and Ness~B Shroff.
\newblock Theory on mixture-of-experts in continual learning.
\newblock \emph{arXiv preprint arXiv:2406.16437}, 2024.

\bibitem[Lin et~al.(2023)Lin, Ju, Liang, and Shroff]{lin2023theory}
Sen Lin, Peizhong Ju, Yingbin Liang, and Ness~B. Shroff.
\newblock Theory on forgetting and generalization of continual learning.
\newblock In Andreas Krause, Emma Brunskill, Kyunghyun Cho, Barbara Engelhardt,
  Sivan Sabato, and Jonathan Scarlett (eds.), \emph{International Conference on
  Machine Learning, {ICML} 2023, 23-29 July 2023, Honolulu, Hawaii, {USA}},
  volume 202 of \emph{Proceedings of Machine Learning Research}, pp.\
  21078--21100. {PMLR}, 2023.

\bibitem[McAllester(1999)]{Mcallester}
David~A McAllester.
\newblock Pac-bayesian model averaging.
\newblock In \emph{Proceedings of the twelfth annual conference on
  Computational learning theory}, pp.\  164--170, 1999.

\bibitem[Parisi et~al.(2019)Parisi, Kemker, Part, Kanan, and
  Wermter]{parisi2019continual}
German~I Parisi, Ronald Kemker, Jose~L Part, Christopher Kanan, and Stefan
  Wermter.
\newblock Continual lifelong learning with neural networks: A review.
\newblock \emph{Neural networks}, 113:\penalty0 54--71, 2019.

\bibitem[Pentina \& Lampert(2014)Pentina and Lampert]{pentina2014pac}
Anastasia Pentina and Christoph Lampert.
\newblock A pac-bayesian bound for lifelong learning.
\newblock In \emph{International Conference on Machine Learning}, pp.\
  991--999. PMLR, 2014.

\bibitem[Pentina \& Lampert(2015)Pentina and Lampert]{pentina2015lifelong}
Anastasia Pentina and Christoph~H Lampert.
\newblock Lifelong learning with non-iid tasks.
\newblock \emph{Advances in Neural Information Processing Systems}, 28, 2015.

\bibitem[Ramasesh et~al.(2020)Ramasesh, Dyer, and Raghu]{ramasesh2020anatomy}
Vinay~Venkatesh Ramasesh, Ethan Dyer, and Maithra Raghu.
\newblock Anatomy of catastrophic forgetting: Hidden representations and task
  semantics.
\newblock In \emph{International Conference on Learning Representations}, 2020.

\bibitem[Shui et~al.(2020)Shui, Chen, Wen, Zhou, Gagn{\'e}, and
  Wang]{shui2020beyond}
Changjian Shui, Qi~Chen, Jun Wen, Fan Zhou, Christian Gagn{\'e}, and Boyu Wang.
\newblock Beyond h-divergence: Domain adaptation theory with jensen-shannon
  divergence.
\newblock \emph{arXiv preprint arXiv:2007.15567}, 6, 2020.

\bibitem[Shun \& McCullagh(1995)Shun and McCullagh]{shun1995laplace}
Zhenming Shun and Peter McCullagh.
\newblock Laplace approximation of high dimensional integrals.
\newblock \emph{Journal of the Royal Statistical Society Series B: Statistical
  Methodology}, 57\penalty0 (4):\penalty0 749--760, 1995.

\bibitem[Zenke et~al.(2017)Zenke, Poole, and Ganguli]{zenke2017continual}
Friedemann Zenke, Ben Poole, and Surya Ganguli.
\newblock Continual learning through synaptic intelligence.
\newblock In \emph{International conference on machine learning}, pp.\
  3987--3995. PMLR, 2017.

\end{thebibliography}
\bibliographystyle{collas2025_conference}

%%%%%%%%%%%%%%%%%%%%%%%%%%%%%%%%%%%%%%%%%%%%%%%%%%%%%%%%%%%%%%%%%%%%%%%%%%%%%%%
%%%%%%%%%%%%%%%%%%%%%%%%%%%%%%%%%%%%%%%%%%%%%%%%%%%%%%%%%%%%%%%%%%%%%%%%%%%%%%%
% APPENDIX
%%%%%%%%%%%%%%%%%%%%%%%%%%%%%%%%%%%%%%%%%%%%%%%%%%%%%%%%%%%%%%%%%%%%%%%%%%%%%%%
%%%%%%%%%%%%%%%%%%%%%%%%%%%%%%%%%%%%%%%%%%%%%%%%%%%%%%%%%%%%%%%%%%%%%%%%%%%%%%%
\newpage
\appendix
\onecolumn

\section{Appendix - notation glossary} \label{append:notation}

\begin{tabular}{p{1.25in}p{3.25in}}
$\mathcal{X}$ & Feature set \\
$x\in \mathcal{X}$ & Data sample \\
$\mathcal{Y}$  & Label set \\
$y\in\mathcal{Y}$  & Data label \\
$T$ & Number of tasks\\
$\mathcal{D}$ & Task distribution over pairs $(x,y)$\\
m & Size of training set\\
$S\sim\mathcal{D}^m$ & Training set of size $m$\\
$\mathcal{H}$ & Hypothesis set\\
$h\in\mathcal{H}, h:\mathcal{X}\rightarrow\mathcal{Y}$ & Hypothesis\\
$\ell: \mathcal{H}\times(\mathcal{X},\mathcal{Y})\rightarrow\mathbb{R}$ & Loss function\\
$\mathcal{L}:\mathcal{H}\times\mathcal{D}\rightarrow\mathbb{R}$ & Expected loss\\
$\hat{\mathcal{L}}:\mathcal{H}\times(\mathcal{X},\mathcal{Y})^m\rightarrow\mathbb{R}$ & Empirical loss\\
$P$ & Data-free prior distribution over hypotheses \\
$Q$ & Data-dependent posterior distribution over hypotheses \\
$Q_{1:i}$ & Data-dependent posterior distribution depending on $S_1,\ldots,S_i$ \\
$\lambda>0$ & KL-divergence weight (Gibbs posterior temperature) \\
$K$ & Maximal loss value \\
$\delta\in(0,1)$ & A probability \\
$d$ & Dimension of hypothesis space under Assumption \ref{assume:laplace-things} \\
$\sigma^2$ & Variance (Gaussian distribution) \\
\end{tabular}

\section{Appendix - proofs} \label{append:proofs}

% Shui proof actually prove this without the variational lemma, but derivation from the lemma is easy

\begin{lemma} \label{lemma:concentration} \cite{shui2020beyond} Let $\pi$ and $\rho$ be two distributions on a common space $\mathcal{Z}$ such that $\rho$ is absolutely continuous w.r.t.~$\pi$. For any $\lambda_t\in \mathbb{R}$ and any measurable function $f:\mathcal{Z}\rightarrow \mathbb{R}$ such that $\mathbb{E}_{z\sim \pi}\left [e^{\lambda_t(f(z)-\mathbb{E}_\pi f(z))} \right ]<\infty$, we have
	\begin{equation}
 \begin{split}
	\lambda_t&\left ( \mathbb{E}_{z\sim \rho}\left [f(z) \right ]-\mathbb{E}_{z\sim \pi}\left [f(z) \right ] \right ) \leq 
 D_{\mathrm{KL}}(\rho||\pi)+ \log\mathbb{E}_{z\sim \pi}\left [e^{\lambda_t(f(z)-\mathbb{E}_\pi f(z))} \right ],
 \end{split}
	\end{equation}	
	where $D_{\mathrm{KL}}$ is the KL-divergence and equality is achieved for $f(z)=\mathbb{E}_{z\sim\pi} f(z)+\frac{1}{\lambda_t}\log(\frac{d\rho}{d\pi})$.
\end{lemma}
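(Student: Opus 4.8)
The plan is to prove this as the Gibbs variational principle (Donsker--Varadhan), i.e.\ to deduce the inequality from the nonnegativity of a relative entropy. First I would observe that the term $\mathbb{E}_\pi f$ appearing on both sides is cosmetic: subtracting $\lambda_t\mathbb{E}_\pi f$ from the claimed bound and using $\log\mathbb{E}_{z\sim\pi}[e^{\lambda_t(f(z)-\mathbb{E}_\pi f)}] = \log\mathbb{E}_{z\sim\pi}[e^{\lambda_t f(z)}] - \lambda_t\mathbb{E}_\pi f$ shows it is equivalent to the centered statement
\[
\lambda_t\,\mathbb{E}_{z\sim\rho}[f(z)] \;\le\; D_{\mathrm{KL}}(\rho\|\pi) + \log\mathbb{E}_{z\sim\pi}\!\left[e^{\lambda_t f(z)}\right].
\]
If $D_{\mathrm{KL}}(\rho\|\pi)=\infty$ there is nothing to prove, so I may assume $\rho$ has a density $d\rho/d\pi$ with respect to $\pi$.

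Next I would introduce the tilted (Gibbs) probability measure $g$ with $\frac{dg}{d\pi}(z) = e^{\lambda_t f(z)} / \mathbb{E}_{z\sim\pi}[e^{\lambda_t f(z)}]$; this is a genuine probability measure precisely because the integrability hypothesis forces $0 < \mathbb{E}_\pi[e^{\lambda_t(f-\mathbb{E}_\pi f)}] < \infty$, hence $0 < \mathbb{E}_\pi[e^{\lambda_t f}] < \infty$. The key step is the chain-rule decomposition
\[
D_{\mathrm{KL}}(\rho\|g) = \mathbb{E}_{z\sim\rho}\!\left[\log\tfrac{d\rho}{d\pi}(z) - \log\tfrac{dg}{d\pi}(z)\right] = D_{\mathrm{KL}}(\rho\|\pi) - \lambda_t\,\mathbb{E}_{z\sim\rho}[f(z)] + \log\mathbb{E}_{z\sim\pi}\!\left[e^{\lambda_t f(z)}\right].
\]
Since $D_{\mathrm{KL}}(\rho\|g)\ge 0$ (Gibbs' inequality / nonnegativity of relative entropy), rearranging yields the centered inequality and therefore the stated one.

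For the equality claim I would use that $D_{\mathrm{KL}}(\rho\|g)=0$ holds iff $\rho = g$ ($\pi$-a.s.), i.e.\ $\frac{d\rho}{d\pi}(z) = e^{\lambda_t f(z)}/\mathbb{E}_\pi[e^{\lambda_t f}]$. Taking logarithms and solving for $f$ (which requires $\lambda_t\neq 0$) gives $f(z) = \frac{1}{\lambda_t}\log\frac{d\rho}{d\pi}(z) + \frac{1}{\lambda_t}\log\mathbb{E}_\pi[e^{\lambda_t f}]$; substituting this candidate back in shows $\frac{1}{\lambda_t}\log\mathbb{E}_\pi[e^{\lambda_t f}] = \mathbb{E}_\pi f$, so the minimizer is exactly $f(z) = \mathbb{E}_{z\sim\pi}f(z) + \frac{1}{\lambda_t}\log\frac{d\rho}{d\pi}(z)$ as asserted.

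I do not expect a real obstacle here, since this is a classical result; the only point deserving care is the measure-theoretic bookkeeping in the chain-rule identity --- checking that the three expectations are each well-defined (not an $\infty-\infty$ cancellation), which is where the hypothesis $\mathbb{E}_\pi[e^{\lambda_t(f-\mathbb{E}_\pi f)}]<\infty$ and $\rho\ll\pi$ are used --- together with the remark that the equality characterization is meaningful only for $\lambda_t\neq 0$ (for $\lambda_t=0$ the inequality degenerates to the trivial $0\le D_{\mathrm{KL}}(\rho\|\pi)$).
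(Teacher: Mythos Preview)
Your argument for the inequality is correct and is the standard Donsker--Varadhan route: introduce the Gibbs tilt $g$, write $D_{\mathrm{KL}}(\rho\|g)=D_{\mathrm{KL}}(\rho\|\pi)-\lambda_t\mathbb{E}_\rho f+\log\mathbb{E}_\pi e^{\lambda_t f}$, and use $D_{\mathrm{KL}}(\rho\|g)\ge 0$. There is nothing in the paper to compare against, however: the lemma is stated with a citation and used as a black box, with no proof supplied.

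One small correction on the equality clause. Your claim that ``substituting this candidate back in shows $\tfrac{1}{\lambda_t}\log\mathbb{E}_\pi[e^{\lambda_t f}]=\mathbb{E}_\pi f$'' does not hold in general: if $f=\tfrac{1}{\lambda_t}\log\tfrac{d\rho}{d\pi}+C$ then $\mathbb{E}_\pi[e^{\lambda_t f}]=e^{\lambda_t C}$, so the identity $C=\tfrac{1}{\lambda_t}\log\mathbb{E}_\pi[e^{\lambda_t f}]$ is automatic for \emph{every} constant $C$, while $\mathbb{E}_\pi f=C+\tfrac{1}{\lambda_t}\mathbb{E}_\pi[\log\tfrac{d\rho}{d\pi}]$ equals $C$ only when $\mathbb{E}_\pi[\log\tfrac{d\rho}{d\pi}]=0$. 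The clean statement is that equality holds iff $f$ differs from $\tfrac{1}{\lambda_t}\log\tfrac{d\rho}{d\pi}$ by an additive constant (both sides of the inequality are invariant under $f\mapsto f+c$); the lemma's self-referential choice of constant is an awkward phrasing rather than something you can verify literally. This does not affect your proof of the inequality itself.
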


\begin{theorem} Restatement of Theorem \ref{thm:forget-base2}:
     For any fixed $S_s,S_t,Q_s,Q_{s:t}$, for any $
    \lambda_t>0$,
    \begin{align} 
\begin{split}
\mathcal{L}(Q_{s:t}, \mathcal{D}_s) &\leq \hat{\mathcal{L}}(Q_{s:t}, S_t) + \frac{1}{\lambda_t} D_{\mathrm{KL}}(Q_{s:t}||Q_{s})+\frac{1}{\lambda_t}\log\mathbb{E}_{h\sim Q_{s}}\left [e^{\lambda_t(\mathcal{L}(h,\mathcal{D}_s)-\hat{\mathcal{L}}(h,S_t))} \right ]
\end{split}
\end{align}
\end{theorem}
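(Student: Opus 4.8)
The plan is to apply the change-of-measure inequality (Lemma \ref{lemma:concentration}) directly, taking the underlying space $\mathcal{Z}$ in that lemma to be the hypothesis space $\mathcal{H}$ rather than a data space — so no concentration argument over $S_t$ is needed at this stage. Concretely, I would set $\pi = Q_s$, $\rho = Q_{s:t}$ (which is absolutely continuous w.r.t.\ $Q_s$ whenever $D_{\mathrm{KL}}(Q_{s:t}||Q_s)$ is finite; otherwise the claimed bound is vacuous), and choose the test function $f(h) = \mathcal{L}(h,\mathcal{D}_s) - \hat{\mathcal{L}}(h,S_t)$. Since $S_s, S_t, Q_s, Q_{s:t}$ are all fixed, $f$ is a deterministic function of $h$ alone, so the lemma applies with parameter $\lambda_t$.

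With these choices, Lemma \ref{lemma:concentration} gives $\lambda_t\left(\mathbb{E}_{h\sim Q_{s:t}}[f(h)] - \mathbb{E}_{h\sim Q_s}[f(h)]\right) \leq D_{\mathrm{KL}}(Q_{s:t}||Q_s) + \log \mathbb{E}_{h\sim Q_s}\!\left[e^{\lambda_t(f(h) - \mathbb{E}_{Q_s} f)}\right]$. The next step is to expand the log-MGF term as $\log \mathbb{E}_{h\sim Q_s}[e^{\lambda_t f(h)}] - \lambda_t\,\mathbb{E}_{Q_s} f$, so that the centering term $-\lambda_t\,\mathbb{E}_{Q_s} f$ cancels exactly against the corresponding term already present on the left-hand side. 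This leaves $\lambda_t\,\mathbb{E}_{h\sim Q_{s:t}}[f(h)] \leq D_{\mathrm{KL}}(Q_{s:t}||Q_s) + \log \mathbb{E}_{h\sim Q_s}[e^{\lambda_t f(h)}]$.

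Finally, I would use linearity of the expected and empirical loss in the posterior, i.e.\ $\mathcal{L}(Q,\mathcal{D}) = \mathbb{E}_{h\sim Q}\mathcal{L}(h,\mathcal{D})$ and likewise for $\hat{\mathcal{L}}$, to rewrite $\mathbb{E}_{h\sim Q_{s:t}}[f(h)] = \mathcal{L}(Q_{s:t},\mathcal{D}_s) - \hat{\mathcal{L}}(Q_{s:t},S_t)$, substitute the definition of $f$ back into the exponent on the right, divide through by $\lambda_t > 0$ (positivity of $\lambda_t$ is exactly what preserves the inequality direction here), and rearrange to isolate $\mathcal{L}(Q_{s:t},\mathcal{D}_s)$. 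This yields precisely \eqref{eq:forget-base2}.

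I do not expect a serious obstacle: the argument is essentially a one-shot invocation of the Donsker--Varadhan variational identity. The only points requiring care are (i) reading the lemma with the "measure" ranging over hypotheses rather than data, so that nothing probabilistic is invoked yet; (ii) the bookkeeping of the centering term $\mathbb{E}_{Q_s} f$, which must cancel exactly for the stated clean form; and (iii) the finiteness hypothesis $\mathbb{E}_{h\sim Q_s}[e^{\lambda_t(f(h)-\mathbb{E}_{Q_s} f)}] < \infty$ needed to apply the lemma — but if this expectation diverges then the right-hand side of \eqref{eq:forget-base2} is $+\infty$ and the bound holds trivially, so finiteness may be assumed without loss of generality (and under the later boundedness assumption it is automatic).
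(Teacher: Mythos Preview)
Your proposal is correct and matches the paper's own proof essentially line for line: the paper likewise applies Lemma~\ref{lemma:concentration} with $\pi=Q_s$, $\rho=Q_{s:t}$, and $f(h)=\mathcal{L}(h,\mathcal{D}_s)-\hat{\mathcal{L}}(h,S_t)$, then pulls the constant $e^{-\lambda_t\mathbb{E}_{Q_s}f}$ out of the log-expectation to cancel the centering term and rearranges. Your additional remarks on absolute continuity and finiteness are sound and more explicit than what the paper records.
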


\begin{proof}
    Starting from Lemma \ref{lemma:concentration}, we can choose $f(z)=\lambda_t(\mathcal{L}(z,\mathcal{D}_s)-\hat{\mathcal{L}}(z,S_t))$, giving us

\begin{align*}
&\lambda_t\mathbb{E}_{h\sim Q_{s:t}}\left [\mathcal{L}(h,\mathcal{D}_s)-\hat{\mathcal{L}}(h,S_t) \right ] - \lambda_t\mathbb{E}_{h\sim Q_{s}}\left [\mathcal{L}(h,\mathcal{D}_s)-\hat{\mathcal{L}}(h,S_t) \right ] \\
&~~\leq D_{\mathrm{KL}}(Q_{s:t}||Q_{s})+\log\mathbb{E}_{h\sim Q_{s}}\left [e^{\lambda_t(\mathcal{L}(h,\mathcal{D}_s)-\hat{\mathcal{L}}(h,S_t))}e^{-\lambda_t(\mathcal{L}(Q_s,\mathcal{D}_s)-\hat{\mathcal{L}}(Q_s,S_t))} \right ]
\end{align*}

Extracting terms that do not depend on $h$ from the expectation, we get

\begin{align} \label{eq:forget-base}
\begin{split}
F(Q_{s:t},\mathcal{D}_s) &\leq \hat{\mathcal{L}}(Q_{s:t}, S_t) - \mathcal{L}(Q_{s}, D_s) + \frac{1}{\lambda_t} D_{\mathrm{KL}}(Q_{s:t}||Q_{s})\\
&+\frac{1}{\lambda_t}\log\mathbb{E}_{h\sim Q_{s}}\left [e^{\lambda_t(\mathcal{L}(h,\mathcal{D}_s)-\hat{\mathcal{L}}(h,S_t))} \right ]
\end{split}
\end{align}

\end{proof}

%%%%%%%%%%%%%%%%%%%%%%%%%%%%%%%%%%%%%%
\begin{lemma} \label{lemma:hoeffding-concentration} (See also Lemma $1.1$ in \citet{alquier2021user})
	Let $l:Z\times H\rightarrow[0,K]$ be a measurable function. Let $\pi\in\mathcal{M}(H)$ be a distribution over $H$ that is independent w.r.t. $Z$. Let $S\in Z^m$ be an i.i.d.~sample. With probability at least $1-\delta$ over the choice of $S$,
	$$\log \mathbb{E}_{h\sim \pi}\left [e^{t(\frac{1}{m}\sum_i l(z_i,h)-\mathbb{E}_{z}l(z,h))}\right ]\leq \frac{t^2K^2}{8m}+\log{1/ \delta}$$
\end{lemma}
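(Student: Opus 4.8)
The plan is to combine Hoeffding's lemma for bounded random variables with Markov's inequality, exploiting the independence of $\pi$ from the sample $S$ to exchange the order of the two expectations. First I would fix a hypothesis $h\in H$ and look at the random variable $\frac{1}{m}\sum_{i=1}^m l(z_i,h)$, an average of $m$ i.i.d.\ terms each lying in $[0,K]$, with mean $\mathbb{E}_z l(z,h)$. Writing it as a sum of independent centered summands $\frac{1}{m}\left(l(z_i,h)-\mathbb{E}_z l(z,h)\right)$, each confined to an interval of length $K/m$, Hoeffding's lemma applied summand-by-summand gives the moment-generating-function bound
$$\mathbb{E}_{S}\,e^{t\left(\frac{1}{m}\sum_i l(z_i,h)-\mathbb{E}_z l(z,h)\right)}\le e^{t^2 (K/m)^2 m/8}=e^{t^2K^2/(8m)},$$
valid for every $t\in\mathbb{R}$ and, crucially, uniformly over $h$.

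Next I would integrate over $h\sim\pi$ and swap the two expectations. Since the integrand is nonnegative and $\pi$ does not depend on $S$, Tonelli's theorem licenses the exchange, so that
$$\mathbb{E}_{S}\,\mathbb{E}_{h\sim\pi}\,e^{t\left(\frac{1}{m}\sum_i l(z_i,h)-\mathbb{E}_z l(z,h)\right)}=\mathbb{E}_{h\sim\pi}\,\mathbb{E}_{S}\,e^{t\left(\frac{1}{m}\sum_i l(z_i,h)-\mathbb{E}_z l(z,h)\right)}\le e^{t^2K^2/(8m)}.$$

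Finally I would apply Markov's inequality to the nonnegative random variable $W(S)\triangleq\mathbb{E}_{h\sim\pi}\,e^{t\left(\frac{1}{m}\sum_i l(z_i,h)-\mathbb{E}_z l(z,h)\right)}$: for any $\delta\in(0,1)$, $\Pr_S\!\left(W(S)\ge\frac{1}{\delta}\,\mathbb{E}_S W(S)\right)\le\delta$, so with probability at least $1-\delta$ we have $W(S)\le\frac{1}{\delta}\,e^{t^2K^2/(8m)}$; taking logarithms yields the claimed inequality. The only points requiring care are the measurability needed to invoke Tonelli and pinning down the constant $1/8$ in Hoeffding's lemma (one factor $(K/m)^2/8$ per summand, times $m$ summands); neither is a genuine obstacle, so this is essentially a textbook computation and the substantive ingredient is the uniform-in-$h$ Hoeffding bound.
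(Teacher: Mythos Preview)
Your proposal is correct and matches the paper's proof essentially step for step: Hoeffding's lemma applied per summand to get the uniform-in-$h$ MGF bound $e^{t^2K^2/(8m)}$, Fubini/Tonelli to exchange the $S$ and $\pi$ expectations, and Markov's inequality to convert the expectation bound into a high-probability one. The only cosmetic difference is ordering (the paper writes Markov first, then swaps, then applies Hoeffding), which changes nothing.
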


\begin{proof} 
	Using Markov's inequality, we know that 
	$$\textrm{Pr}\left (\mathbb{E}_{h\sim \pi}\left [e^{t(\frac{1}{m}\sum_i l(z_i,h)-\mathbb{E}_{z}l(z,h))}\right ]<\frac{1}{\delta}\mathbb{E}_{S\sim Z^m}\mathbb{E}_{h\sim \pi}\left [e^{t(\frac{1}{m}\sum_i l(z_i,h)-\mathbb{E}_{z}l(z,h))}\right ] \right ) \geq 1-\delta$$
	
	Applying Fubini's theorem (both distributions are independent), we can re-order the expectations
	
	$$\textrm{Pr}\left (\mathbb{E}_{h\sim \pi}\left [e^{t(\frac{1}{m}\sum_i l(z_i,h)-\mathbb{E}_{z}l(z,h))}\right ]<\frac{1}{\delta}\mathbb{E}_{h\sim \pi}\mathbb{E}_{S\sim Z^m}\left [e^{t(\frac{1}{m}\sum_i l(z_i,h)-\mathbb{E}_{z}l(z,h))}\right ] \right ) \geq 1-\delta$$
	
	Since $S$ is drawn i.i.d.~and $l$ is bounded, we can apply Hoeffding's lemma to each example, giving us
	
	$$\textrm{Pr}\left (\mathbb{E}_{h\sim \pi}\left [e^{t(\frac{1}{m}\sum_i l(z_i,h)-\mathbb{E}_{z}l(z,h))}\right ]<\frac{1}{\delta}\mathbb{E}_{h\sim \pi}\left [e^{\frac{t^2K^2}{8m}}\right ] \right ) \geq 1-\delta$$
	
	$$\textrm{Pr}\left (\log\mathbb{E}_{h\sim \pi}\left [e^{t(\frac{1}{m}\sum_i l(z_i,h)-\mathbb{E}_{z}l(z,h))}\right ]<\log\frac{1}{\delta}e^{\frac{t^2K^2}{8m}} \right ) \geq 1-\delta$$
	
	and so we have 
	
	$$\textrm{Pr}\left (\log\mathbb{E}_{h\sim \pi}\left [e^{t(\frac{1}{m}\sum_i l(z_i,h)-\mathbb{E}_{z}l(z,h))}\right ]<\log\frac{1}{\delta}+\frac{t^2K^2}{8m} \right ) \geq 1-\delta$$
	
\end{proof}

%%%%%%%%%%%%%%%%%%%%%%%%%%%%%%%%%%%%%%%
%proof for thm:first

\begin{corollary} 
If $\ell\in [0,K]$, for any fixed $S_s,Q_s,\lambda_t>0$, the following applies uniformly for all posteriors $Q_{s:t}$ with probability at least $1-\delta$ over the choice of $S_t$,
\begin{align}
\begin{split}
\mathcal{L}(Q_{s:t}, \mathcal{D}_s) &\leq \hat{\mathcal{L}}(Q_{s:t}, S_t) + \frac{1}{\lambda_t} D_{\mathrm{KL}}(Q_{s:t}||Q_{s})\\
&+\frac{1}{2\lambda_t}\log \mathbb{E}_{h\sim Q_{s}}\left [e^{2\lambda_t(\mathcal{L}(h,\mathcal{D}_s)-\mathcal{L}(h,\mathcal{D}_t))}\right ]+\frac{\lambda_t K^2}{4m_t}+\frac{1}{2\lambda_t}\log(1/\delta)
\end{split}
\end{align}
\end{corollary}

\begin{proof}
    Starting from \eqref{eq:forget-base}, we note that
    $$\log\mathbb{E}_{h\sim Q_{s}}\left [e^{\lambda_t(\mathcal{L}(h,\mathcal{D}_s)-\hat{\mathcal{L}}(h,S_t))} \right ] = \log\mathbb{E}_{h\sim Q_{s}}\left [e^{\lambda_t(\mathcal{L}(h,\mathcal{D}_s)-\mathcal{L}(h,\mathcal{D}_t)+\mathcal{L}(h,\mathcal{D}_t)-\hat{\mathcal{L}}(h,S_t))} \right ].$$

    This gives us 
\begin{align*}
\log\mathbb{E}_{h\sim Q_{s}}\left [e^{\lambda_t(\mathcal{L}(h,\mathcal{D}_s)-\hat{\mathcal{L}}(h,S_t))} \right] &= \log\mathbb{E}_{h\sim Q_{s}}\left [e^{\lambda_t(\mathcal{L}(h,\mathcal{D}_s)-\mathcal{L}(h,\mathcal{D}_t))}e^{\lambda_t(\mathcal{L}(h,\mathcal{D}_t)-\hat{\mathcal{L}}(h,S_t))} \right ]\\
&\triangleq \log\mathbb{E}_{Q_{s}}\left [e^{\lambda_t\Delta\mathcal{L}(h,\mathcal{D}_s, \mathcal{D}_t)}e^{\lambda_t\Delta\hat{\mathcal{L}}(h,\mathcal{D}_t, S_t)} \right ].
\end{align*}

Similarly to Lemma \ref{lemma:hoeffding-concentration} we can apply Fubini's theorem, and since $\Delta\mathcal{L}(h, \mathcal{D}_s,\mathcal{D}_t)$ is independent of $S_t$ we can re-order the expectations and use Hoeffding's lemma on $e^{\lambda_t\Delta\mathcal{L}(h, \mathcal{D}_t,S_t)}$
if $\ell\in [0,K]$. More formally,

$$\mathbb{E}_{S_t}\mathbb{E}_{Q_{s}}\left [e^{\lambda_t\Delta\mathcal{L}(h,\mathcal{D}_s, \mathcal{D}_t)}e^{\lambda_t\Delta\hat{\mathcal{L}}(h,\mathcal{D}_t, S_t)} \right ]=\mathbb{E}_{Q_{s}}\mathbb{E}_{S_t}\left [e^{\lambda_t\Delta\mathcal{L}(h,\mathcal{D}_s, \mathcal{D}_t)}e^{\lambda_t\Delta\hat{\mathcal{L}}(h,\mathcal{D}_t, S_t)} \right ]$$
$$=\mathbb{E}_{Q_{s}}\left [\mathbb{E}_{S_t}\left [e^{\lambda_t\Delta\hat{\mathcal{L}}(h,\mathcal{D}_t, S_t)} \right ]e^{\lambda_t\Delta\mathcal{L}(h,\mathcal{D}_s, \mathcal{D}_t)}\right ]\leq \mathbb{E}_{Q_{s}}\left [e^{\lambda_t^2K^2/8m_t}e^{\lambda_t\Delta\mathcal{L}(h,\mathcal{D}_s, \mathcal{D}_t)}\right ]$$

Finally, we get with probability at least $1-\delta$ over the choice of $S_t$

$$\log\mathbb{E}_{h\sim Q_{s}}\left [e^{\lambda_t(\mathcal{L}(h,\mathcal{D}_s)-\hat{\mathcal{L}}(h,S_t))} \right ]\leq \log\mathbb{E}_{Q_{s}}\left [e^{\lambda_t\Delta\mathcal{L}(h,\mathcal{D}_s, \mathcal{D}_t)}\right ]+\frac{\lambda_t^2K^2}{8m_t}+\log(1/\delta)$$

\end{proof}

%%%%%%%%%%%%%%%%%%%%%%%%%%%%%%%%%%%%%%%%%%%%%%

%proof for oracle:base

\begin{corollary} Restatement of Corollary \ref{thm:oracle-base}:
For any $Q_s, S_s, \lambda_t>0$, if $\ell\in [0,K]$, 
\begin{equation} 
\begin{split}
\mathbb{E}_{S_t\sim \mathcal{D}_t}&\mathcal{L}( \hat{Q}^{\lambda_t}_{s:t},\mathcal{D}_s)\leq\inf_{Q_{s:t}}\left \{ \mathcal{L}(Q_{s:t},\mathcal{D}_t) + \frac{1}{\lambda_t}D_{\mathrm{KL}}(Q_{s:t}||Q_{s}) \right \} \\
&+\frac{\lambda_t K^2}{8m_t}+\frac{1}{\lambda_t}\log\mathbb{E}_{h\sim Q_s}\left [e^{\lambda_t(\mathcal{L}(h,\mathcal{D}_s)-\mathcal{L}(h,\mathcal{D}_t))} \right ]
\end{split}
\end{equation}
\end{corollary}

\begin{proof}
Starting from Lemma \ref{lemma:concentration}, we know that 

$$\mathbb{E}_{z\sim \rho}\left [f(z) \right ]\leq \mathbb{E}_{z\sim \pi}\left [f(z) \right ]+ \frac{1}{\lambda_t}D_{\mathrm{KL}}(\rho||\pi)+ \frac{1}{\lambda_t}\log\mathbb{E}_{z\sim \pi}\left [e^{\lambda_t(f(z)-\mathbb{E}_\pi f(z))} \right ]$$

In particular, for $\hat{\rho}_\lambda(z)\propto \pi(z) e^{-\lambda_t f(z) }$, this is an equality (from \citeauthor{donsker1975large}'s [\citeyear{donsker1975large}] variational lemma).
From this, we know that

\begin{equation}
\mathbb{E}_{z\sim \hat{\rho}_\lambda}\left [f(z) \right ]= \mathbb{E}_{z\sim \pi}\left [f(z) \right ]+ \frac{1}{\lambda_t}D_{\mathrm{KL}}(\hat{\rho}_\lambda||\pi)+ \frac{1}{\lambda_t}\log\mathbb{E}_{z\sim \pi}\left [e^{\lambda_t(f(z)-\mathbb{E}_\pi f(z))} \right ]
\end{equation}

If we pick $f(z)=\mathcal{L}(z,\mathcal{D}_s)-\hat{\mathcal{L}}(z,S_t)$ as before, we get

\begin{equation*} 
\begin{split}
\mathbb{E}_{z\sim \hat{\rho}_\lambda}\left [\mathcal{L}(z,\mathcal{D}_s) \right ]&= \mathbb{E}_{z\sim \pi}\left [f(z) \right ]+\mathbb{E}_{z\sim \hat{\rho}_\lambda}\left [\hat{\mathcal{L}}(z,S_t) \right ]+ \frac{1}{\lambda_t}D_{\mathrm{KL}}(\hat{\rho}_\lambda||\pi)\\&+ \frac{1}{\lambda_t}\log\mathbb{E}_{z\sim \pi}\left [e^{\lambda_t(f(z)-\mathbb{E}_\pi f(z))} \right ]
\end{split}
\end{equation*}

And as such,
$$F( \hat{\rho}_\lambda,\mathcal{D}_s)\leq \inf_{\rho}\left \{ \hat{\mathcal{L}}(\rho,S_t) + \frac{1}{\lambda_t}D_{\mathrm{KL}}(\rho||\pi)  \right \}-\mathcal{L}(\pi,D_s)+\frac{1}{\lambda_t}\log\mathbb{E}_{z\sim \pi}\left [e^{\lambda_t(\mathcal{L}(z,\mathcal{D}_s)-\hat{\mathcal{L}}(z,S_t))} \right ],$$

or using our previous terminology with $\hat{Q}^{\lambda_t}_{s:t}(h)\propto Q_s(h)e^{-\lambda_t\hat{\mathcal{L}}(h,S_t)}$, 

$$\mathcal{L}( \hat{Q}^{\lambda_t}_{s:t},\mathcal{D}_s)\leq \inf_{Q_{s:t}}\left \{ \hat{\mathcal{L}}(Q_{s:t},S_t) + \frac{1}{\lambda_t}D_{\mathrm{KL}}(Q_{s:t}||Q_{s}) \right \}+\frac{1}{\lambda_t}\log\mathbb{E}_{h\sim Q_s}\left [e^{\lambda_t(\mathcal{L}(h,\mathcal{D}_s)-\hat{\mathcal{L}}(h,S_t))} \right ]$$

If we take an expectation on $S_t$, we get

\begin{equation*} 
\begin{split}
\mathbb{E}_{S_t\sim \mathcal{D}_t}\mathcal{L}( \hat{Q}^{\lambda_t}_{s:t},\mathcal{D}_s)&\leq \mathbb{E}_{S_t\sim \mathcal{D}_t}\inf_{Q_{s:t}}\left \{ \hat{\mathcal{L}}(Q_{s:t},S_t) + \frac{1}{\lambda_t}D_{\mathrm{KL}}(Q_{s:t}||Q_{s}) \right \}\\&+\frac{1}{\lambda_t}\mathbb{E}_{S_t\sim \mathcal{D}_t}\log\mathbb{E}_{h\sim Q_s}\left [e^{\lambda_t(\mathcal{L}(h,\mathcal{D}_s)-\hat{\mathcal{L}}(h,S_t))} \right ]
\end{split}
\end{equation*}

This gives us the following oracle inequality (in expectation):

\begin{equation*} 
\begin{split}
\mathbb{E}_{S_t\sim \mathcal{D}_t}\mathcal{L}( \hat{Q}^{\lambda_t}_{s:t},\mathcal{D}_s)&\leq \inf_{Q_{s:t}}\left \{ \mathcal{L}(Q_{s:t},\mathcal{D}_t) + \frac{1}{\lambda_t}D_{\mathrm{KL}}(Q_{s:t}||Q_{s}) \right \}\\&+\frac{1}{\lambda_t}\log\mathbb{E}_{h\sim Q_s}\mathbb{E}_{S_t\sim \mathcal{D}_t}\left [e^{\lambda_t(\mathcal{L}(h,\mathcal{D}_s)-\hat{\mathcal{L}}(h,S_t))} \right ]
\end{split}
\end{equation*}

We have

$$\frac{1}{\lambda_t}\log\mathbb{E}_{h\sim Q_s}\mathbb{E}_{S_t\sim \mathcal{D}_t}\left [e^{\lambda_t(\mathcal{L}(h,\mathcal{D}_s)-\hat{\mathcal{L}}(h,S_t))} \right ]$$
$$=\frac{1}{\lambda_t}\log\mathbb{E}_{h\sim Q_s}\mathbb{E}_{S_t\sim \mathcal{D}_t}\left [e^{\lambda_t(\mathcal{L}(h,\mathcal{D}_s)-\mathcal{L}(h,\mathcal{D}_t))}e^{\lambda_t(\mathcal{L}(h,\mathcal{D}_t)-\hat{\mathcal{L}}(h,S_t))} \right ]$$

$$=\frac{1}{\lambda_t}\log\mathbb{E}_{h\sim Q_s}\left [e^{\lambda_t(\mathcal{L}(h,\mathcal{D}_s)-\mathcal{L}(h,\mathcal{D}_t))}\mathbb{E}_{S_t\sim \mathcal{D}_t}e^{\lambda_t(\mathcal{L}(h,\mathcal{D}_t)-\hat{\mathcal{L}}(h,S_t))} \right ]$$

Using Hoeffding's lemma, we get

$$\frac{1}{\lambda_t}\log\mathbb{E}_{h\sim Q_s}\mathbb{E}_{S_t\sim \mathcal{D}_t}\left [e^{\lambda_t(\mathcal{L}(h,\mathcal{D}_s)-\hat{\mathcal{L}}(h,S_t))} \right ] \leq \frac{\lambda_t K^2}{8m_t}+\frac{1}{\lambda_t}\log\mathbb{E}_{h\sim Q_s}\left [e^{\lambda_t(\mathcal{L}(h,\mathcal{D}_s)-\mathcal{L}(h,\mathcal{D}_t))} \right ]$$

\end{proof}

\begin{theorem} Appendix-only Theorem: 
For any $\lambda_T>0$, assuming all $Q_{1:j}$ are empirical Gibbs posteriors, $\ell\in[0,K]$, 
the following applies a.s. in expectation over $S_{j}\sim \mathcal{D}_j$,
\begin{align} \label{eq:oracle-gibbs-ratio}
\begin{split}
\forall i\in[1,T-1]:
\mathbb{E}_{S_i}\mathcal{L}(\hat{Q}^{\lambda_T}_{1:T}, \mathcal{D}_i) &\leq \frac{\lambda_T K^2}{8m_i}+\frac{1}{\lambda_T}\mathbb{E}_{S_i}\log\frac{\mathbb{E}_{h\sim P}\left [e^{-\sum_{j=1,j\neq i}^{T}\lambda_j\hat{\mathcal{L}}(h,S_j)} \right ]}{\mathbb{E}_{h\sim P}\left [e^{-\sum_{j=1}^{T}\lambda_j\hat{\mathcal{L}}(h,S_j)} \right ]}
\end{split}
\end{align}
\end{theorem}

\begin{proof}
    By applying Theorem \ref{thm:forget-base2} with $T$ tasks, we have 
    \begin{align} \label{eq:forget-base-T}
\begin{split}
 \mathcal{L}&(Q_{1:T}, \mathcal{D}_t) \leq \hat{\mathcal{L}}(Q_{1:T}, S_T)+ \frac{1}{\lambda_T} D_{\mathrm{KL}}(Q_{1:T}||Q_{1:T-1})
 +\frac{1}{\lambda_T}\log\mathbb{E}_{h\sim Q_{1:T-1}}\left [e^{\lambda_T(\mathcal{L}(h,\mathcal{D}_t)-\hat{\mathcal{L}}(h,S_T))} \right ]
 \end{split}
 \end{align}
, suppose we assume that for each task we apply an empirical Gibbs learner, meaning $$\forall i\in[2,T], \hat{Q}^{\lambda_i}_{1:i}(h)\propto \hat{Q}^{\lambda_{i-1}}_{1:i-1}(h)e^{-\lambda_i\hat{\mathcal{L}}(h,S_i)}$$ 
where $\hat{Q}^{\lambda_1}_{1:1}(h)\propto P(h)e^{-\lambda_1\hat{\mathcal{L}}(h,S_1)}$.

We can provide bounds on the forgetting of $\hat{Q}^{\lambda_T}_{1:T}$ by setting the posterior $Q_{1:T}$ and the prior $Q_{1:T-1}$ as Gibbs posteriors:

\begin{align*} 
\begin{split}
\forall i\in[1,T-1]:
\mathcal{L}(\hat{Q}^{\lambda_T}_{1:T}, \mathcal{D}_i) &\leq \frac{1}{\lambda_T}\log\frac{\mathbb{E}_{h\sim \hat{Q}^{\lambda_{T-1}}_{1:T-1}}\left [e^{\lambda_T(\mathcal{L}(h,\mathcal{D}_i)-\hat{\mathcal{L}}(h,S_T))} \right ]}{\mathbb{E}_{h\sim \hat{Q}^{\lambda_{T-1}}_{1:T-1}}\left [e^{-\lambda_T\hat{\mathcal{L}}(h,S_T)} \right ]}
\end{split}
\end{align*}

By using the definition of the Gibbs posterior, we can unravel the expectations:

\begin{align*} 
\begin{split}
\forall i\in[1,T-1]:
\mathcal{L}(\hat{Q}^{\lambda_T}_{1:T}, \mathcal{D}_i) &\leq \frac{1}{\lambda_T}\log\frac{\mathbb{E}_{h\sim \hat{Q}^{\lambda_{T-2}}_{1:T-2}}\left [e^{\lambda_T(\mathcal{L}(h,\mathcal{D}_i)-\hat{\mathcal{L}}(h,S_T))-\lambda_{T-1}\mathcal{L}(h,\mathcal{D}_{T-1})} \right ]/Z_{T-1}}{\mathbb{E}_{h\sim \hat{Q}^{\lambda_{T-2}}_{1:T-2}}\left [e^{-\lambda_{T-1}\mathcal{L}(h,\mathcal{D}_{T-1})-\lambda_T\hat{\mathcal{L}}(h,S_T)} \right ]/Z_{T-1}}
\end{split}
\end{align*}

Setting $\lambda_{T-1}=\lambda_T$ and repeating this process, we arrive at:

\begin{align*} 
\begin{split}
\forall i\in[1,T-1]:
\mathcal{L}(\hat{Q}^{\lambda_T}_{1:T}, \mathcal{D}_i) &\leq \frac{1}{\lambda_T}\log\frac{\mathbb{E}_{h\sim P}\left [e^{\lambda_T\mathcal{L}(h,\mathcal{D}_i)-\sum_{j=1}^{T}\lambda_j\hat{\mathcal{L}}(h,S_j)} \right ]}{\mathbb{E}_{h\sim P}\left [e^{-\sum_{j=1}^{T}\lambda_j\hat{\mathcal{L}}(h,S_j)} \right ]}
\end{split}
\end{align*}

Taking an expectation over $S_i$,

\begin{align*} 
\begin{split}
\forall i\in[1,T-1]:
\mathbb{E}_{S_i}\mathcal{L}(\hat{Q}^{\lambda_T}_{1:T}, \mathcal{D}_i) &\leq \frac{1}{\lambda_T}\mathbb{E}_{S_i}\log\frac{\mathbb{E}_{h\sim P}\left [e^{\lambda_T\mathcal{L}(h,\mathcal{D}_i)-\sum_{j=1}^{T}\lambda_j\hat{\mathcal{L}}(h,S_j)} \right ]}{\mathbb{E}_{h\sim P}\left [e^{-\sum_{j=1}^{T}\lambda_j\hat{\mathcal{L}}(h,S_j)} \right ]}
\end{split}
\end{align*}

Applying Jensen's inequality:

\begin{align*} 
\begin{split}
\forall i\in[1,T-1]:
&\mathbb{E}_{S_i}\mathcal{L}(\hat{Q}^{\lambda_T}_{1:T}, \mathcal{D}_i) \\&\leq \frac{1}{\lambda_T}\mathbb{E}_{S_i}\log\frac{\mathbb{E}_{h\sim P}\left [\mathbb{E}_{S_i}\left [e^{\lambda_T\mathcal{L}(h,\mathcal{D}_i)-\lambda_i\hat{\mathcal{L}}(h,S_i)} \right ]e^{-\sum_{j=1,j\neq i}^{T}\lambda_j\hat{\mathcal{L}}(h,S_j)} \right ]}{\mathbb{E}_{h\sim P}\left [e^{-\sum_{j=1}^{T}\lambda_j\hat{\mathcal{L}}(h,S_j)} \right ]}
\end{split}
\end{align*}

If $\lambda_i=\lambda_T$, we can apply Hoeffding's lemma:

\begin{align*} 
\begin{split}
\forall i\in[1,T-1]:
\mathbb{E}_{S_i}\mathcal{L}(\hat{Q}^{\lambda_T}_{1:T}, \mathcal{D}_i) &\leq \frac{\lambda_T K^2}{8m_i}+\frac{1}{\lambda_T}\mathbb{E}_{S_i}\log\frac{\mathbb{E}_{h\sim P}\left [e^{-\sum_{j=1,j\neq i}^{T}\lambda_j\hat{\mathcal{L}}(h,S_j)} \right ]}{\mathbb{E}_{h\sim P}\left [e^{-\sum_{j=1}^{T}\lambda_j\hat{\mathcal{L}}(h,S_j)} \right ]}
\end{split}
\end{align*}

\end{proof}

\begin{theorem} Restatement of Theorem \ref{thm:forgetting-asymptotic}
For any $\lambda_T>0$, assuming all $Q_{1:j}$ are empirical Gibbs posteriors, and that Assumption \ref{assume:laplace-things} holds, the following holds a.s. in expectation over $S_{j}\sim \mathcal{D}_j$, $\forall j\in[T]$
\begin{align} 
\begin{split}
\lim_{m,T\rightarrow \infty}\frac{1}{T-1}\mathbb{E}_{S_1,\ldots,S_T}\sum_{i=1}^{T-1}\mathcal{L}(\hat{Q}^{\lambda_T}_{1:T}, \mathcal{D}_i) &\leq \lim_{m,T\rightarrow \infty}\frac{1}{T-1}\sum_{i=1}^{T-1}\mathcal{L}(h^*_{1:T/i}, \mathcal{D}_i).
\end{split}
\end{align}
\end{theorem}

\begin{proof}
    Starting from \eqref{eq:oracle-gibbs-ratio}, we have (assuming all training sets are of size $m$ and $\lambda_i=\lambda_T$) $\forall i\in[1,T-1]$
    
\begin{align*} 
\begin{split}
\mathbb{E}_{S_i}\mathcal{L}(\hat{Q}^{\lambda_T}_{1:T}, \mathcal{D}_i) &\leq \frac{\lambda_T K^2}{8m}+\frac{1}{\lambda_T}\mathbb{E}_{S_i}\log\frac{\mathbb{E}_{h\sim P}\left [e^{-\lambda_T\sum_{j=1,j\neq i}^{T}(\hat{\mathcal{L}}(h,S_j)-\mathcal{L}(h,\mathcal{D}_j)+\mathcal{L}(h,\mathcal{D}_j))} \right ]}{\mathbb{E}_{h\sim P}\left [e^{-\lambda_T\sum_{j=1}^{T}(\hat{\mathcal{L}}(h,S_j)-\mathcal{L}(h,\mathcal{D}_j)+\mathcal{L}(h,\mathcal{D}_j))} \right ]}
\end{split}
\end{align*}

Taking the expectation on all training sets (marked $\mathbb{E}_S$ for brevity), we can apply Jensen's inequality as well as Fubini's Theorem to obtain $\forall i\in[1,T-1]$

\begin{align*} 
\begin{split}
\mathbb{E}_{S}\mathcal{L}(\hat{Q}^{\lambda_T}_{1:T}, \mathcal{D}_i) &\leq \frac{\lambda_T K^2}{8m_i}+\frac{1}{\lambda_T}\log\mathbb{E}_{S}\frac{\mathbb{E}_{h\sim P}\left [\mathbb{E}_{S}\left [e^{-\lambda_T\sum_{j=1,j\neq i}^{T}(\hat{\mathcal{L}}(h,S_j)-\mathcal{L}(h,\mathcal{D}_j))}\right ]e^{-\lambda_T\sum_{j=1,j\neq i}^{T}\mathcal{L}(h,\mathcal{D}_j)} \right ]}{\mathbb{E}_{h\sim P}\left [e^{-\lambda_T\sum_{j=1}^{T}(\hat{\mathcal{L}}(h,S_j)-\mathcal{L}(h,\mathcal{D}_j)+\mathcal{L}(h,\mathcal{D}_j))} \right ]}
\end{split}
\end{align*}

Via a similar application of Hoeffding's lemma as in Lemma \ref{lemma:hoeffding-concentration}, we have (for $m\rightarrow\infty$)

\begin{align*} 
\begin{split}
\forall i\in[1,T-1]:
\mathbb{E}_{S}\mathcal{L}(\hat{Q}^{\lambda_T}_{1:T}, \mathcal{D}_i) &\leq \frac{\lambda_T K^2}{8m_i}+\frac{\lambda_T K^2 T}{8m}+\frac{1}{\lambda_T}\log\mathbb{E}_{S}\frac{\mathbb{E}_{h\sim P}\left [e^{-\lambda_T\sum_{j=1,j\neq i}^{T}\mathcal{L}(h,\mathcal{D}_j)} \right ]}{\mathbb{E}_{h\sim P}\left [e^{-\lambda_T\sum_{j=1}^{T}(\hat{\mathcal{L}}(h,S_j)-\mathcal{L}(h,\mathcal{D}_j)+\mathcal{L}(h,\mathcal{D}_j))} \right ]}
\end{split}
\end{align*}

If we have (for $m\rightarrow\infty$) $$cov_P\left (e^{\lambda_T\sum_{j=1}^{T}\mathcal{L}(h,\mathcal{D}_j)},e^{-\lambda_T\sum_{j=1}^{T}\hat{\mathcal{L}}(h,S_j)}\right )\leq 0$$ we also have
$$cov_P\left (e^{-\lambda_T\sum_{j=1}^{T}\mathcal{L}(h,\mathcal{D}_j)},e^{-\lambda_T\sum_{j=1}^{T}\hat{\mathcal{L}}(h,S_j)+\mathcal{L}(h,\mathcal{D}_j))}\right )\geq 0$$
and therefore (for $m\rightarrow\infty$)

\begin{align*} 
\begin{split}
\forall i\in[1,T-1]:
\mathbb{E}_{S}\mathcal{L}(\hat{Q}^{\lambda_T}_{1:T}, \mathcal{D}_i) &\leq \ldots+\frac{1}{\lambda_T}\log\mathbb{E}_{S}\frac{\mathbb{E}_{h\sim P}\left [e^{-\lambda_T\sum_{j=1,j\neq i}^{T}\mathcal{L}(h,\mathcal{D}_j)} \right ]}{\mathbb{E}_{h\sim P}\left [e^{-\lambda_T\sum_{j=1}^{T}\hat{\mathcal{L}}(h,S_j)-\mathcal{L}(h,\mathcal{D}_j)} \right ]\mathbb{E}_{h\sim P}\left [e^{-\lambda_T\sum_{j=1}^{T}\mathcal{L}(h,\mathcal{D}_j)} \right ]}
\end{split}
\end{align*}

\begin{align*} 
\begin{split}
=\ldots+\frac{1}{\lambda_T}\log\frac{\mathbb{E}_{h\sim P}\left [e^{-\lambda_T\sum_{j=1,j\neq i}^{T}\mathcal{L}(h,\mathcal{D}_j)} \right ]}{\mathbb{E}_{h\sim P}\left [e^{-\lambda_T\sum_{j=1}^{T}\mathcal{L}(h,\mathcal{D}_j)} \right ]}\mathbb{E}_{S}\frac{1}{\mathbb{E}_{h\sim P}\left [e^{-\lambda_T\sum_{j=1}^{T}\hat{\mathcal{L}}(h,S_j)-\mathcal{L}(h,\mathcal{D}_j)} \right ]}
\end{split}
\end{align*}

Via Jensen's inequality 

\begin{align*} 
\begin{split}
\leq \frac{\lambda_T K^2}{8m_i}+\frac{\lambda_T K^2 T}{8m}+\frac{1}{\lambda_T}\log\frac{\mathbb{E}_{h\sim P}\left [e^{-\lambda_T\sum_{j=1,j\neq i}^{T}\mathcal{L}(h,\mathcal{D}_j)} \right ]}{\mathbb{E}_{h\sim P}\left [e^{-\lambda_T\sum_{j=1}^{T}\mathcal{L}(h,\mathcal{D}_j)} \right ]}\mathbb{E}_{S}\mathbb{E}_{h\sim P}\left [e^{\lambda_T\sum_{j=1}^{T}\hat{\mathcal{L}}(h,S_j)+\mathcal{L}(h,\mathcal{D}_j)} \right ]
\end{split}
\end{align*}

Via Fubini's theorem and Hoeffding's lemma, similarly to Lemma \ref{lemma:hoeffding-concentration}, 

\begin{align*} 
\begin{split}
\forall i\in[1,T-1]:
\mathbb{E}_{S}\mathcal{L}(\hat{Q}^{\lambda_T}_{1:T}, \mathcal{D}_i) &\leq \frac{\lambda_T K^2}{8m_i}+\frac{\lambda_T K^2 T}{4m}+\frac{1}{\lambda_T}\log\frac{\mathbb{E}_{h\sim P}\left [e^{-\lambda_T\sum_{j=1,j\neq i}^{T}\mathcal{L}(h,\mathcal{D}_j)} \right ]}{\mathbb{E}_{h\sim P}\left [e^{-\lambda_T\sum_{j=1}^{T}\mathcal{L}(h,\mathcal{D}_j)} \right ]}
\end{split}
\end{align*}

Averaging over tasks, we have

\begin{align*} 
    \begin{split}
\frac{1}{T-1}\mathbb{E}_{S}\sum_{i=1}^{T-1}\mathcal{L}(\hat{Q}^{\lambda_T}_{1:T}, \mathcal{D}_i) &\leq \frac{\lambda_T K^2}{8m}+\frac{\lambda_T K^2 T}{4m}+\\&\frac{1}{\lambda_T(T-1)}\mathbb{E}_{S}\sum_{i=1}^{T-1}\log\frac{\mathbb{E}_{h\sim P}\left [e^{-\lambda_T(T-1)\sum_{j=1,j\neq i}^{T}\frac{\mathcal{L}(h,\mathcal{D}_j)}{(T-1)}} \right ]}{\mathbb{E}_{h\sim P}\left [e^{-\lambda_T \mathcal{L}(h,\mathcal{D}_i)}e^{-\lambda_T(T-1)\sum_{j=1,j\neq i}^{T}\frac{\mathcal{L}(h,\mathcal{D}_j)}{(T-1)}}\right ]}
\end{split}
\end{align*}
Under Assumption \ref{assume:laplace-things}, we can apply Laplace's method \citep{shun1995laplace} on both numerator and denominator assuming $\lambda_T (T-1)\rightarrow \infty$. Taking $\lambda_T=\sqrt{m}/T$ and considering the limit $m,T \rightarrow \infty$ we therefore have 
\begin{align*} 
    \begin{split}
&\lim_{m,T\rightarrow \infty}\frac{1}{T-1}\mathbb{E}_{S}\sum_{i=1}^{T-1}\mathcal{L}(\hat{Q}^{\sqrt{m}/T}_{1:T}, \mathcal{D}_i) \leq \lim_{m,T\rightarrow \infty}\frac{T}{\sqrt{m}(T-1)}\mathbb{E}_{S}\sum_{i=1}^{T-1}\log A_i
\end{split}
\end{align*}
where
\begin{equation*}
    A_i=\frac{C(m,T,d,h^*_{1:T/i})\cdot\left [e^{-\sqrt{m}(T-1)/T\sum_{j=1,j\neq i}^{T}\frac{\mathcal{L}(h^*_{1:T/i},\mathcal{D}_j)}{(T-1)}} \right ]\left (1+O(\frac{T}{\sqrt{m}T})+\ldots\right )}{C(m,T,d,h^*_{1:T/i})\cdot\left [e^{-\sqrt{m}/T\mathcal{L}(h^*_{1:T/i},\mathcal{D}_i)}e^{-\sqrt{m}(T-1)/T\sum_{j=1,j\neq i}^{T}\frac{\mathcal{L}(h^*_{1:T/i},\mathcal{D}_j)}{(T-1)}} \right ]\left (1+O(\frac{T}{\sqrt{m}T})+\ldots\right )}.
\end{equation*}

Since most of the elements in the numerator and denominator are shared, we have 
\begin{align*} 
    \begin{split}
&\lim_{m,T\rightarrow \infty}\frac{1}{T-1}\mathbb{E}_{S}\sum_{i=1}^{T-1}\mathcal{L}(\hat{Q}^{\sqrt{m}/T}_{1:T}, \mathcal{D}_i) \leq \lim_{m,T\rightarrow \infty}\frac{T}{\sqrt{m}(T-1)}\mathbb{E}_{S}\sum_{i=1}^{T-1}\log\frac{1}{e^{-\sqrt{m}/T\hat{\mathcal{L}}(h^*_{1:T/i},S_i)}}
\end{split}
\end{align*}

A rearrangement of terms completes the proof.
\end{proof}

\begin{theorem} Restatement of Theorem \ref{thm:forgetting-extended}:
For any $\lambda_T>0$, assuming all $Q_{1:j}$ are empirical Gibbs posteriors, $\ell\in[0,K]$, and that
 $\mathrm{cov}_{P}(e^{-\lambda_T\hat{\mathcal{L}}(h,S_i)}, e^{-\sum_{j=1,j\neq i}^{T}\lambda_j\hat{\mathcal{L}}(h,S_j)})\geq 0$,
for any sample of training sets $S_{j}\sim \mathcal{D}_j$,
\begin{align} 
\begin{split}
\forall i\in[1,T]:
\mathbb{E}_{S_i}\mathcal{L}(\hat{Q}^{\lambda_T}_{1:T}, \mathcal{D}_i) &\leq \frac{\lambda_T K^2}{8m_i}+\mathcal{L}(P,\mathcal{D}_i)
\end{split}
\end{align}
\end{theorem}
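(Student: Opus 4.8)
The plan is to re-run the change-of-measure argument behind \eqref{eq:forget-base-T} against a carefully chosen reference distribution, exploit the closed form of the Kullback--Leibler divergence between Gibbs posteriors, and close the gap using the covariance hypothesis. First I would unroll the recursive definition of the empirical Gibbs posteriors to get $\hat{Q}^{\lambda_T}_{1:T}(h)\propto P(h)\,\exp\!\big(-\sum_{j=1}^{T}\lambda_j\hat{\mathcal{L}}(h,S_j)\big)$ and isolate task $i$: setting $V(h)\triangleq e^{-\sum_{j\neq i}\lambda_j\hat{\mathcal{L}}(h,S_j)}$ and $\hat{Q}^{(-i)}(h)\propto P(h)V(h)$, we have $\hat{Q}^{\lambda_T}_{1:T}(h)\propto \hat{Q}^{(-i)}(h)\,e^{-\lambda_i\hat{\mathcal{L}}(h,S_i)}$. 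The key feature is that $\hat{Q}^{(-i)}$, while data-dependent, does not depend on $S_i$, so it can legitimately act as a prior in a concentration statement over $S_i$.

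Next I would apply Lemma~\ref{lemma:concentration} with $\pi=\hat{Q}^{(-i)}$, $\rho=\hat{Q}^{\lambda_T}_{1:T}$, parameter $\lambda_T$, and $f(h)=\mathcal{L}(h,\mathcal{D}_i)-\hat{\mathcal{L}}(h,S_i)$ --- exactly the manipulation yielding \eqref{eq:forget-base-T}, but comparing against $\hat{Q}^{(-i)}$ and $S_i$ instead of $Q_{1:T-1}$ and $S_T$. Rearranging gives
\[
\mathcal{L}(\hat{Q}^{\lambda_T}_{1:T},\mathcal{D}_i)\le \hat{\mathcal{L}}(\hat{Q}^{\lambda_T}_{1:T},S_i)+\tfrac{1}{\lambda_T}D_{\mathrm{KL}}\big(\hat{Q}^{\lambda_T}_{1:T}\,\|\,\hat{Q}^{(-i)}\big)+\tfrac{1}{\lambda_T}\log\mathbb{E}_{h\sim \hat{Q}^{(-i)}}\big[e^{\lambda_T(\mathcal{L}(h,\mathcal{D}_i)-\hat{\mathcal{L}}(h,S_i))}\big].
\]
Taking $\mathbb{E}_{S_i}$ and using that $\hat{Q}^{(-i)}$ is independent of $S_i$, the last term is at most $\lambda_T K^2/(8m_i)$ by Jensen, Fubini and Hoeffding's lemma, just as in Lemma~\ref{lemma:hoeffding-concentration}.

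The heart of the proof is the remaining two terms. Since $\hat{Q}^{\lambda_T}_{1:T}$ is the Gibbs reweighting of $\hat{Q}^{(-i)}$ by $-\lambda_i\hat{\mathcal{L}}(\cdot,S_i)$, the divergence is explicit: $D_{\mathrm{KL}}(\hat{Q}^{\lambda_T}_{1:T}\|\hat{Q}^{(-i)})=-\lambda_i\hat{\mathcal{L}}(\hat{Q}^{\lambda_T}_{1:T},S_i)-\log\mathbb{E}_{h\sim\hat{Q}^{(-i)}}[e^{-\lambda_i\hat{\mathcal{L}}(h,S_i)}]$. In the regime where the task-$i$ Gibbs parameter equals $\lambda_T$ (more generally one wants $\lambda_i\ge\lambda_T$, so that the residual $(1-\lambda_i/\lambda_T)\hat{\mathcal{L}}(\hat{Q}^{\lambda_T}_{1:T},S_i)$ is non-positive), the empirical-loss terms cancel and
\[
\hat{\mathcal{L}}(\hat{Q}^{\lambda_T}_{1:T},S_i)+\tfrac{1}{\lambda_T}D_{\mathrm{KL}}\big(\hat{Q}^{\lambda_T}_{1:T}\,\|\,\hat{Q}^{(-i)}\big)=-\tfrac{1}{\lambda_T}\log\mathbb{E}_{h\sim\hat{Q}^{(-i)}}\big[e^{-\lambda_T\hat{\mathcal{L}}(h,S_i)}\big].
\]
Unfolding the definition of $\hat{Q}^{(-i)}$, the right side equals $-\tfrac{1}{\lambda_T}\log\big(\mathbb{E}_P[e^{-\lambda_T\hat{\mathcal{L}}(h,S_i)}V(h)]/\mathbb{E}_P[V(h)]\big)$, and $\mathrm{cov}_P(i,[T])\ge 0$ is precisely the statement $\mathbb{E}_P[e^{-\lambda_T\hat{\mathcal{L}}(h,S_i)}V(h)]\ge \mathbb{E}_P[e^{-\lambda_T\hat{\mathcal{L}}(h,S_i)}]\,\mathbb{E}_P[V(h)]$, so the whole quantity is at most $-\tfrac{1}{\lambda_T}\log\mathbb{E}_P[e^{-\lambda_T\hat{\mathcal{L}}(h,S_i)}]$, which is at most $\hat{\mathcal{L}}(P,S_i)$ by Jensen. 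Taking $\mathbb{E}_{S_i}$ turns $\hat{\mathcal{L}}(P,S_i)$ into $\mathcal{L}(P,\mathcal{D}_i)$; adding the $\lambda_T K^2/(8m_i)$ term from the previous step yields the claim.

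The one idea the argument hinges on is choosing the leave-task-$i$-out Gibbs posterior $\hat{Q}^{(-i)}$ as the reference in the change-of-measure step: it both decouples the concentration over $S_i$ from the other training sets and makes the Gibbs divergence telescope, after which the covariance assumption is exactly what is needed to exchange the $\hat{Q}^{(-i)}$ normalizer for the data-free $P$ normalizer. The remainder is bookkeeping: reconciling the per-task parameters $\lambda_j$ with the single $\lambda_T$ appearing in the covariance condition and the bound (immediate when they agree), and reading the covariance hypothesis, which involves $S_i$, as holding for the relevant training sets.
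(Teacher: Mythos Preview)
Your argument is correct and lands on the same covariance-plus-Jensen endgame as the paper, but the route differs in one organizational choice. The paper literally starts from \eqref{eq:forget-base-T}, i.e.\ it takes the reference measure to be the \emph{sequential} prior $\hat{Q}^{\lambda_{T-1}}_{1:T-1}$ and compares against $S_T$; the Gibbs KL collapses the first two terms, the normalizers cancel when everything is unrolled to $P$, and one obtains the ratio $\mathbb{E}_P\big[e^{\lambda_T\mathcal{L}(h,\mathcal{D}_i)-\sum_j\lambda_j\hat{\mathcal{L}}(h,S_j)}\big]\big/\mathbb{E}_P\big[e^{-\sum_j\lambda_j\hat{\mathcal{L}}(h,S_j)}\big]$. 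Only then does the paper take $\mathbb{E}_{S_i}$ and push it (via Jensen and Hoeffding, with $\lambda_i=\lambda_T$) into the numerator to extract the $\lambda_TK^2/(8m_i)$ term; the covariance decomposition of the denominator and the final Jensen step are identical to yours. Your choice of the leave-task-$i$-out posterior $\hat{Q}^{(-i)}$ as the reference measure is a cleaner reorganization: it makes the $S_i$-independence of the prior explicit from the outset, so the Hoeffding step applies directly to the log-moment term rather than after an unrolling, and the KL telescopes immediately to $-\tfrac{1}{\lambda_T}\log\mathbb{E}_{\hat{Q}^{(-i)}}[e^{-\lambda_T\hat{\mathcal{L}}(h,S_i)}]$. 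Both routes require the same $\lambda_i=\lambda_T$ identification and the same reading of the covariance hypothesis; yours simply bypasses the detour through $S_T$.
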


\begin{proof}
    Starting from \eqref{eq:oracle-gibbs-ratio}, we mark 
$$\mathrm{cov}_{P}(i, [T])\triangleq\mathrm{cov}_{P}(e^{-\lambda_T\hat{\mathcal{L}}(h,S_i)}, e^{-\sum_{j=1,j\neq i}^{T}\lambda_j\hat{\mathcal{L}}(h,S_j)}).$$

From the definition of covariance, we can decompose the denominator:

\begin{align} \label{eq-oracle-forget-extend}
\begin{split}
&\forall i\in[1,T-1]:
\mathbb{E}_{S_i}\mathcal{L}(\hat{Q}^{\lambda_T}_{1:T}, \mathcal{D}_i) \leq \frac{\lambda_T K^2}{8m_i}\\&+\frac{1}{\lambda_T}\mathbb{E}_{S_i}\log\frac{\mathbb{E}_{h\sim P}\left [e^{-\sum_{j=1,j\neq i}^{T}\lambda_j\hat{\mathcal{L}}(h,S_j)} \right ]}{\mathbb{E}_{h\sim P}\left [e^{-\sum_{j=1,j\neq i}^{T}\lambda_j\hat{\mathcal{L}}(h,S_j)} \right ]\mathbb{E}_{h\sim P}\left [e^{-\lambda_T\hat{\mathcal{L}}(h,S_i)} \right ]+\mathrm{cov}_{P}(i, [T])}
\end{split}
\end{align}

And if this covariance is non-negative, we can set it at $0$ and still retain a valid upper bound, giving us the forgetting bound via Jensen's inequality.

Similarly for forward transfer, we get 

\begin{align} \label{eq-oracle-forward-extend}
\begin{split}
&\mathbb{E}_{S_T}\mathcal{L}(\hat{Q}^{\lambda_T}_{1:T}, \mathcal{D}_T) \leq \frac{\lambda_T K^2}{8m_T}\\&+
\frac{1}{\lambda_T}\mathbb{E}_{S_T}\log\frac{\mathbb{E}_{h\sim P}\left [e^{-\sum_{j=1}^{T-1}\lambda_j\hat{\mathcal{L}}(h,S_j)} \right ]}{\mathbb{E}_{h\sim P}\left [e^{-\sum_{j=1}^{T-1}\lambda_j\hat{\mathcal{L}}(h,S_j)} \right ]\mathbb{E}_{h\sim P}\left [e^{-\lambda_T\hat{\mathcal{L}}(h,S_T)} \right ]+\mathrm{cov}_{P}(T, [T])}
\end{split}
\end{align}

and apply $\mathrm{cov}_{P}(T, [T])\geq 0$ and Jensen's inequality to get the bound for generalization.

\end{proof}

%%%%%%%%%%%%%%%%%%%%%%%%%%%%%%%%%%%%%%%%%%%%%%%

\begin{corollary} \label{thm:oracle-T-highcov}:
    Appendix-only Corollary: Under the same conditions as Theorem \ref{thm:forgetting-extended}, if we additionally have
\begin{align*} 
\begin{split}
\mathrm{cov}_{P}(e^{-\lambda_T\hat{\mathcal{L}}(h,S_i)}, e^{-\sum_{j=1,j\neq i}^{T}\lambda_j\hat{\mathcal{L}}(h,S_j)})
 \geq e^{-c}-\mathbb{E}_{h\sim P}\left [e^{-\lambda_T\hat{\mathcal{L}}(h,S_i)} \right ],
\end{split}
\end{align*}
we have (for any sample of training sets $S_{j}\sim \mathcal{D}_j$),
\begin{align*} 
\begin{split}
\forall i\in[1,T-1]:
\mathbb{E}_{S_i}\mathcal{L}(\hat{Q}^{\lambda_T}_{1:T}, \mathcal{D}_i) &\leq \frac{\lambda_T K^2}{8m_i}+\frac{c}{\lambda_T}.
\end{split}
\end{align*}
\end{corollary}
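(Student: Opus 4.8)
The plan is to resume from the intermediate inequality \eqref{eq-oracle-forget-extend} obtained in the proof of Theorem~\ref{thm:forgetting-extended}, which already holds under all the hypotheses this corollary inherits (all $Q_{1:j}$ empirical Gibbs posteriors, $\lambda_i=\lambda_T$, $\ell\in[0,K]$, $\mathrm{cov}_P(i,[T])\ge0$). Abbreviating $A\triangleq\mathbb{E}_{h\sim P}[e^{-\sum_{j\neq i}\lambda_j\hat{\mathcal{L}}(h,S_j)}]$ and $B\triangleq\mathbb{E}_{h\sim P}[e^{-\lambda_T\hat{\mathcal{L}}(h,S_i)}]$, that inequality reads
\begin{equation*}
\mathbb{E}_{S_i}\mathcal{L}(\hat{Q}^{\lambda_T}_{1:T},\mathcal{D}_i)\le\frac{\lambda_T K^2}{8m_i}+\frac{1}{\lambda_T}\mathbb{E}_{S_i}\log\frac{A}{AB+\mathrm{cov}_P(i,[T])},
\end{equation*}
so everything reduces to bounding the log-ratio inside the expectation by $c$ for every realization of $S_i$. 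In Theorem~\ref{thm:forgetting-extended} this ratio was controlled by merely dropping $\mathrm{cov}_P(i,[T])\ge0$, which gives $\le 1/B$ and is not sharp enough; here I would instead use the stronger covariance lower bound in the hypothesis.

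The key step is the pointwise estimate $AB+\mathrm{cov}_P(i,[T])\ge Ae^{-c}$. First observe $0<A\le1$ and $0<B\le1$: each factor $e^{-\lambda\hat{\mathcal{L}}}$ lies in $(0,1]$ since the losses are in $[0,K]$ and the $\lambda$'s are positive, so in particular $A>0$ and the ratio above is well defined and positive (indeed $AB+\mathrm{cov}_P(i,[T])=\mathbb{E}_{h\sim P}[e^{-\sum_{j=1}^{T}\lambda_j\hat{\mathcal{L}}(h,S_j)}]>0$). Then I split on the sign of $e^{-c}-B$. If $B\ge e^{-c}$, then $AB+\mathrm{cov}_P(i,[T])\ge AB\ge Ae^{-c}$, using only $\mathrm{cov}_P(i,[T])\ge0$. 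If $B<e^{-c}$, the hypothesis gives $\mathrm{cov}_P(i,[T])\ge e^{-c}-B$, whence
\begin{equation*}
AB+\mathrm{cov}_P(i,[T])\ge AB+e^{-c}-B=Ae^{-c}+(1-A)(e^{-c}-B)\ge Ae^{-c},
\end{equation*}
since $1-A\ge0$ and $e^{-c}-B>0$. In both cases $\dfrac{A}{AB+\mathrm{cov}_P(i,[T])}\le e^{c}$, so $\log\dfrac{A}{AB+\mathrm{cov}_P(i,[T])}\le c$ pointwise in $S_i$.

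Finally I would substitute this pointwise bound into the displayed inequality: the expectation over $S_i$ of the constant $c$ is $c$, yielding $\mathbb{E}_{S_i}\mathcal{L}(\hat{Q}^{\lambda_T}_{1:T},\mathcal{D}_i)\le\frac{\lambda_T K^2}{8m_i}+\frac{c}{\lambda_T}$, which is the claim. The same argument applied verbatim to \eqref{eq-oracle-forward-extend} would give the forward-transfer analogue for $i=T$. There is no serious obstacle here beyond the case split on $e^{-c}-B$; the point worth flagging is that the contraction factors $A,B\le1$ are exactly what makes the comparatively mild hypothesis $\mathrm{cov}_P(i,[T])\ge e^{-c}-B$ sufficient, rather than the stronger-looking condition $\mathrm{cov}_P(i,[T])\ge A(e^{-c}-B)$ that one might naively read off from clearing the denominator.
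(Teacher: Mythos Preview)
Your proposal is correct and takes essentially the same approach as the paper: both start from \eqref{eq-oracle-forget-extend} and show that the hypothesis $\mathrm{cov}_P(i,[T])\ge e^{-c}-B$ forces $A/(AB+\mathrm{cov}_P(i,[T]))\le e^{c}$, using $0<A\le 1$ and $\mathrm{cov}_P(i,[T])\ge 0$. The only difference is presentational: the paper rearranges to the equivalent condition $\mathrm{cov}_P(i,[T])/A+B\ge e^{-c}$ and notes that $\mathrm{cov}_P(i,[T])\ge e^{-c}-B$ is a ``slightly looser'' sufficient condition (implicitly using $A\le 1$), whereas you make the case split on the sign of $e^{-c}-B$ explicit, which is a cleaner justification of exactly the same step.
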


\begin{proof}
Considering our forgetting bound \eqref{eq-oracle-forget-extend}, we consider when 

\begin{align*} 
\begin{split}
\frac{\mathbb{E}_{h\sim P}\left [e^{-\sum_{j=1,j\neq i}^{T}\lambda_j\hat{\mathcal{L}}(h,S_j)} \right ]}{\mathbb{E}_{h\sim P}\left [e^{-\sum_{j=1,j\neq i}^{T}\lambda_j\hat{\mathcal{L}}(h,S_j)} \right ]\mathbb{E}_{h\sim P}\left [e^{-\lambda_T\hat{\mathcal{L}}(h,S_i)} \right ]+\mathrm{cov}_{P}(i, [T])} \leq e^{c},
\end{split}
\end{align*}

Moving terms around, this condition is satisfied if

\begin{align*} 
\begin{split}
\frac{\mathrm{cov}_{P}(i, [T])}{\mathbb{E}_{h\sim P}\left [e^{-\sum_{j=1,j\neq i}^{T}\lambda_j\hat{\mathcal{L}}(h,S_j)} \right ]}+\mathbb{E}_{h\sim P}\left [e^{-\lambda_T\hat{\mathcal{L}}(h,S_i)} \right ]
 \geq e^{-c},
\end{split}
\end{align*}

We note that since $\ell\in[0,K]$, we have $0<\mathbb{E}_{h\sim P}\left [e^{-\sum_{j=1,j\neq i}^{T}\lambda_j\hat{\mathcal{L}}(h,S_j)} \right ]\leq 1.$

We can further simplify this by taking a slightly looser condition:

\begin{align*} 
\begin{split}
\mathrm{cov}_{P}(i, [T])
 \geq e^{-c}-\mathbb{E}_{h\sim P}\left [e^{-\lambda_T\hat{\mathcal{L}}(h,S_i)} \right ]
\end{split}
\end{align*}

and this is the condition we assumed. As such, we can replace the last term in \eqref{eq-oracle-forget-extend} with 

$$\frac{1}{\lambda_t}\log e^{c},$$
giving us a bound of the form:

\begin{align} \label{eq-oracle-forget-extend-best}
\begin{split}
\forall i\in[1,T-1]:
\mathbb{E}_{S_i}\mathcal{L}(\hat{Q}^{\lambda_T}_{1:T}, \mathcal{D}_i) &\leq \frac{\lambda_T K^2}{8m_i}+\frac{c}{\lambda_T}.
\end{split}
\end{align}

\end{proof}

 %%%%%%%%%%%%%%%%%%%%%%%%%%%%%%%%%%%%%%%%%%%

\begin{theorem} Restatement of \ref{thm:forgetting-asymptotic-cov}
Assuming all $Q_{1:j}$ are empirical Gibbs posteriors with $\lambda_i=\sqrt{m}$, $P$ is a uniform prior over $\mathcal{H}$ , and that Assumption \ref{assume:laplace-things} holds with a condition on $h^*_i=\argmin_h\mathcal{L}(h,\mathcal{D}_i)$ instead of the condition on all other other tasks, 
if $\ \forall i\in[T-1],\  \mathrm{cov}_{P}(i, [T])\geq 0$,
the following holds a.s. in expectation over $S_{j}\sim \mathcal{D}_j$, $\forall j\in[T]$
\begin{align} 
\begin{split}
\lim_{m\rightarrow \infty}\frac{1}{T-1}\mathbb{E}_{S_1,\ldots,S_T}\sum_{i=1}^{T-1}\mathcal{L}(\hat{Q}^{\sqrt{m}}_{1:T}, \mathcal{D}_i) &\leq \frac{1}{T-1}\sum_{i=1}^{T-1}\mathcal{L}(h^*_{i}, \mathcal{D}_i)+\lim_{m \rightarrow\infty}\frac{\mathbb{E}_{S}\sum_{i=1}^{T-1}\log \det \mathcal{L}_i''(h^*_{i})}{2\sqrt{m}(T-1)}.
\end{split}
\end{align}
\end{theorem}

\begin{proof}
    Starting from \eqref{eq-oracle-forget-extend}, we can use the fact that $\mathrm{cov}_{P}(i, [T])\geq 0$ to upper bound
    \begin{align*} 
\begin{split}
&\mathbb{E}_{S_i}\mathcal{L}(\hat{Q}^{\lambda_T}_{1:T}, \mathcal{D}_i) \leq \frac{\lambda_T K^2}{8m_i}\\&+
\frac{1}{\lambda_T}\mathbb{E}_{S_T}\log\frac{\mathbb{E}_{h\sim P}\left [e^{-\sum_{j=1,j\neq i}^{T}\lambda_j\hat{\mathcal{L}}(h,S_j)} \right ]}{\mathbb{E}_{h\sim P}\left [e^{-\sum_{j=1,j\neq i}^{T}\lambda_j\hat{\mathcal{L}}(h,S_j)} \right ]\mathbb{E}_{h\sim P}\left [e^{-\lambda_T\hat{\mathcal{L}}(h,S_i)} \right ]+0}
\end{split}
\end{align*}
giving us the simplified
    \begin{align*} 
\begin{split}
&\mathbb{E}_{S_i}\mathcal{L}(\hat{Q}^{\lambda_T}_{1:T}, \mathcal{D}_i) \leq \frac{\lambda_T K^2}{8m_i}\\&+
\frac{1}{\lambda_T}\mathbb{E}_{S_T}\log\frac{1}{\mathbb{E}_{h\sim P}\left [e^{-\lambda_T\hat{\mathcal{L}}(h,S_i)} \right ]}
\end{split}
\end{align*}

Similarly to Theorem \ref{thm:forgetting-asymptotic}, we can apply Jensen's inequality and Hoeffding's lemma, yielding

\begin{align*} 
\begin{split}
&\mathbb{E}_{S}\mathcal{L}(\hat{Q}^{\lambda_T}_{1:T}, \mathcal{D}_i) \leq \frac{\lambda_T K^2}{4m_i}\\&+
\frac{1}{\lambda_T}\mathbb{E}_{S_T}\log\frac{1}{\mathbb{E}_{h\sim P}\left [e^{-\lambda_T\mathcal{L}(h,\mathcal{D}_i)} \right ]}
\end{split}
\end{align*}

Averaging out these terms, we can (under Assumption \ref{assume:laplace-things}) apply Laplace's method on the denominator assuming  $\lambda_T=\sqrt{m}\rightarrow \infty$, meaning
\begin{align*} 
    \begin{split}
&\lim_{m\rightarrow \infty}\frac{1}{T-1}\mathbb{E}_{S}\sum_{i=1}^{T-1}\mathcal{L}(\hat{Q}^{\sqrt{m}}_{1:T}, \mathcal{D}_i) \leq \lim_{m \rightarrow\infty}\frac{1}{\sqrt{m}(T-1)}\mathbb{E}_{S}\sum_{i=1}^{T-1}\log \frac{1}{\left (\frac{2\pi}{\sqrt{m}}\right )^{d/2}\frac{P(h^*_i)}{\sqrt{\det \mathcal{L}_i''(h^*_{i})}}\left [e^{-\sqrt{m}\mathcal{L}(h^*_{i},\mathcal{D}_i)} \right ]}
\end{split}
\end{align*}
Considering the logarithm we can arrive at
\begin{align*} 
    \begin{split}
&\lim_{m\rightarrow \infty}\frac{1}{T-1}\mathbb{E}_{S}\sum_{i=1}^{T-1}\mathcal{L}(\hat{Q}^{\sqrt{m}}_{1:T}, \mathcal{D}_i) \leq \lim_{m \rightarrow\infty}\frac{1}{\sqrt{m}(T-1)}\mathbb{E}_{S}\sum_{i=1}^{T-1}\log \left (\left (\frac{2\pi}{\sqrt{m}}\right )^{-d/2}\frac{\sqrt{\det \mathcal{L}_i''(h^*_{i})}}{P(h^*_i)}\left [e^{\sqrt{m}\mathcal{L}(h^*_{i},\mathcal{D}_i)} \right ]\right )
\end{split}
\end{align*}
Since $m\rightarrow \infty$, we have 
\begin{align*} 
    \begin{split}
&\lim_{m\rightarrow \infty}\frac{1}{T-1}\mathbb{E}_{S}\sum_{i=1}^{T-1}\mathcal{L}(\hat{Q}^{\sqrt{m}}_{1:T}, \mathcal{D}_i) \leq \lim_{m \rightarrow\infty}\frac{\mathbb{E}_{S}\sum_{i=1}^{T-1}\log \det \mathcal{L}_i''(h^*_{i})}{2\sqrt{m}(T-1)}+\frac{1}{(T-1)}\sum_{i=1}^{T-1}\mathcal{L}(h^*_{i},\mathcal{D}_i)
\end{split}
\end{align*}

\end{proof}

 %%%%%%%%%%%%%%%%%%%%%%%%%%%%%%%%%%%%%%%%%%%
\newpage
\section{Appendix - configuration and hyper-parameters} \label{append:hyperparam}

For $10d$-Gaussian tasks, we use several setups of binary classification tasks in $\mathbb{R}^{10}$. All tasks draw samples from a $10$-dimensional Gaussian distribution $p(x) = \mathcal{N}(x;0,I_{10})$, and $y=\mathrm{sgn}(a^\top x)$. 
We consider several settings for $a$, where for all setting, all values of $a$ other than the first two are zero, meaning the last $8$ features do not impact the label.
We consider the following settings:
\begin{itemize}
    \item Similar tasks: linear separators for all tasks are within $10^\circ$ of some reference angle.
    %\item Distractors: like ``Similar tasks" but $20\%$  of tasks have reversed labels.
    \item Gradual shift: tasks change angle gradually in a set direction, with each consecutive task being within $10^\circ$ of the last one.
    %\item Orthogonal tasks: two sets of similar tasks with a $90^\circ$ angle between the separators of both task sets. Tasks alternate between both types.
    \item Orthogonal shift: two sets of similar tasks with a $90^\circ$ angle between the separators of both task sets. The first half of tasks are of the first type and the second half corresponds to the second type.
\end{itemize}

For $10d$-Gaussian tasks, we use $3000$ samples per task, and a total of $T=100$ tasks.

\begin{table*}[b]
\caption{Discounted ($\gamma=0.95$) backward transfer and forgetting for $10d$ continual tasks at $T=100$. Metrics are in loss percentage over the relevant models and test sets averaged over $5$ seeds. Standard error is reported after the $\pm$ sign.} 
\label{2d-discounts-table}
\vskip 0.1in
\begin{center}
\begin{small}
\begin{sc}
\begin{tabular}{lcccc}
Method  & BWT & BWT bound   & Forgetting &  Forgetting bound   \\
\midrule
 Similar tasks   \\
\midrule
EWC    & $0.4 \pm 0.0$ & $0.4 \pm 0.0$ & $0.1 \pm 0.0$ & $0.1 \pm 0.0$  \\
VI & $0.3 \pm 0.1$ & $0.3 \pm 0.1$ & $-0.0 \pm 0.0$ & $-0.0 \pm 0.0$ \\
\midrule
 Gradual shift   \\
 \midrule
EWC    & $0.4 \pm 0.0$ & $0.4 \pm 0.0$ & $0.0 \pm 0.0$ & $0.1 \pm 0.0$ \\
VI & $0.0 \pm 0.0$ & $0.1 \pm 0.0$ & $-0.0 \pm 0.0$ & $-0.0 \pm 0.0$ \\
\midrule
 Orth. shift   \\
 \midrule
EWC    & $1.0 \pm 0.2$ & $1.0 \pm 0.2$ & $0.1 \pm 0.0$ & $0.1 \pm 0.0$  \\
VI & $0.3 \pm 0.1$ & $0.3 \pm 0.1$ & $0.0 \pm 0.0$ & $0.1 \pm 0.0$ \\
\end{tabular}
\end{sc}
\end{small}
\end{center}
\vskip -0.1in
\end{table*}

The model consists of a shared fully connected layer of $64$ neurons and a tanh activation, as well as a linear classification head for each task. Each task was trained for $1$ epochs with batch size $16$. The learning rate was static at $1e^{-3}$.
The $\lambda$ parameter was set to $1e^{-3}$. 
All results were run for $5$ random seeds and averages and standard error were reported.

For EWC, the $\sigma^2$ parameter used for posterior construction was set as $1e^{-2}$, and the regularization weight was set at $\lambda_{\mathrm{EWC}}=40$.

For permuted-MNIST, we used a different pixel permutation per task, and each task involved $10$-way classification. All $60000$ training samples were used for training with a batch size of $128$. The learning rate was static at $1e^{-3}$ and the $\lambda$ parameter was set to $1e^{-3}$.
The model consists of a shared convolutional neural network consisting of 2 CNN blocks each comprised of $64$ $5\times 5$ $2d$ convolutions followed by max-pooling and tanh non-linearity, followed by a linear layer consisting of $400$ neurons and a tanh activation as well as a linear classification head for each task. 
All results were run for $5$ random seeds and averages and standard error were reported in Table \ref{permuted-mnist-full-table}.

\begin{table*}
\caption{Metrics for permuted-MNIST tasks. All test metrics are in loss percentage over the relevant models and test sets averaged over $5$ seeds. Backward transfer is the final loss percentage over all test sets. Standard error is reported after the $\pm$ sign.} 
\label{permuted-mnist-full-table}
\vskip 0.1in
\begin{center}
\begin{small}
\begin{sc}
\begin{tabular}{lccccc}
Method  &BWT &BWT bound  &Forgetting &Forgetting bound \eqref{eq:final-forget-bound} &Test loss \\
\midrule
EWC    & $1.7 \pm 0.1$ & $2.4 \pm 0.2$ & $0.6 \pm 0.1$ & $1.3 \pm 0.2$ & $1.0 \pm 0.1$  \\
VI & $10.4 \pm 0.3$ & $12.1 \pm 0.4$ & $-2.9 \pm 0.2$ & $-1.2 \pm 0.3$ & $3.1 \pm 0.1$ \\
\end{tabular}
\end{sc}
\end{small}
\end{center}
\vskip -0.1in
\end{table*}

For split-MNIST, each task involved half of the labels (at random) chosen as positive and half as negative. This is a minor departure from the standard split-MNIST problem where $5$ different binary classification tasks are created and their loss is averaged, but the overall behavior is similar.
All $60000$ training samples were used for training with a batch size of $128$. The learning rate was static at $1e^{-3}$ and the $\lambda$ parameter was set to $1e^{-3}$.
The model used is identical to the CNN used for permuted-MNIST.
All results were run for $5$ random seeds and averages and standard error were reported in Table \ref{split-mnist-full-table}.

\begin{table*}
\caption{Metrics for split-MNIST tasks. All test metrics are in loss percentage over the relevant models and test sets averaged over $5$ seeds. Backward transfer is the final loss percentage over all test sets. Standard error is reported after the $\pm$ sign.} 
\label{split-mnist-full-table}
\vskip 0.1in
\begin{center}
\begin{small}
\begin{sc}
\begin{tabular}{lccccc}
Method  &BWT &BWT bound &Forgetting &Forgetting bound \eqref{eq:final-forget-bound} &Test loss \\
\midrule
EWC    & $12.7 \pm 0.7$ & $13.6 \pm 0.7$ & $11.9 \pm 0.7$ & $12.8 \pm 0.7$ & $0.7 \pm 0.1$  \\
VI & $15.1 \pm 0.3$ & $17.7 \pm 0.4$ & $2.5 \pm 0.4$ & $5.1 \pm 0.5$ & $3.1 \pm 0.3$ \\
\end{tabular}
\end{sc}
\end{small}
\end{center}
\vskip -0.1in
\end{table*}

For split-CIFAR10, each task involved half of the labels (at random) chosen as positive and half as negative. This is a minor departure from the standard split-CIFAR10 problem where $5$ different binary classification tasks are created and their loss is averaged, but the overall behavior is similar.
All $50000$ training samples were used for training with a batch size of $256$. The learning rate was static at $1e^{-3}$ and the $\lambda$ parameter was set to $1e^{-3}$.
The model used is identical to the CNN used for permuted-MNIST but with $800$ neurons in the linear layer.
All results were run for $5$ random seeds and averages and standard error were reported in Table \ref{split-cifar-full-table}.

\begin{table*}
\caption{Metrics for split-CIFAR10 tasks. All test metrics are in loss percentage over the relevant models and test sets averaged over $5$ seeds. Backward transfer is the final loss percentage over all test sets. Standard error is reported after the $\pm$ sign.} 
\label{split-cifar-full-table}
\vskip 0.1in
\begin{center}
\begin{small}
\begin{sc}
\begin{tabular}{lccccc}
Method  &BWT &BWT bound &Forgetting &Forgetting bound \eqref{eq:final-forget-bound} &Test loss \\
\midrule
EWC    & $36.9 \pm 0.4 $ & $40.7 \pm 0.5$ & $5.7 \pm 0.3$ & $9.5 \pm 0.6$ & $30.9 \pm 0.9$  \\
VI & $48.3 \pm 0.0$ & $50.3 \pm 0.2$ & $-1.2 \pm 0.1$ & $0.8 \pm 0.3$ & $49.1 \pm 0.4$ \\
\end{tabular}
\end{sc}
\end{small}
\end{center}
\vskip -0.1in
\end{table*}

All experiments were run on local hardware with an NVIDIA GeForce 4090 GPU and an Intel i9 CPU.

\end{document}